\newenvironment{talign*}
{\let\displaystyle\textstyle\csname align*\endcsname}
{\endalign}
\def\blfootnote{\xdef\@thefnmark{*}\@footnotetext}
\title{Repeated Inverse Reinforcement Learning}
\author{
  Kareem Amin\thanks{Equal contribution.} \\
  Google Research \\
  New York, NY 10011\\
  \texttt{kamin@google.com} \\
  \And
  Nan Jiang$^*$ \hspace*{2em} Satinder Singh \\
  Computer Science \& Engineering, \\
  University of Michigan, 
  Ann Arbor, MI 48104\\
  \texttt{\{nanjiang,baveja\}@umich.edu} \\
}
\newcommand{\Demoer}{Human\xspace}
\newcommand{\demoer}{human\xspace}
\newcommand{\Learner}{Agent\xspace}
\newcommand{\learner}{agent\xspace}
\newcommand{\trueR}{\theta_\star}
\newcommand{\RR}{\mathbb{R}}
\newcommand{\EE}{\mathbb{E}}
\newcommand{\Scal}{\mathcal{S}}
\newcommand{\Acal}{\mathcal{A}}
\newcommand{\Abdt}{\mathcal{D}}
\newcommand{\occ}{\eta}
\newcommand{\bone}{\mathbf{1}}
\newcommand{\bzro}{\mathbf{0}}
\newcommand{\sref}{s_{\textrm{ref}}}
\newcommand{\ii}[1]{^{(#1)}}
\newcommand{\ee}{\bm{e}}
\newcommand{\del}[1]{^{\setminus #1}}
\newcommand{\mvee}{\textrm{MVEE}}
\newcommand{\vol}{\textrm{vol}}
\newcommand{\spread}{\textrm{spread}}
\theoremstyle{plain}
\newtheorem{theorem}{\textbf{Theorem}}
\newtheorem{lemma}{\textbf{Lemma}}
\newtheorem{proposition}{Proposition}
\theoremstyle{definition}
\newtheorem{definition}{Definition}
\newtheorem{example}{Example}
\begin{document}
\blfootnote{This paper extends an unpublished arXiv paper by the authors \citep{amin2016towards}.}

\maketitle

\begin{abstract}
\begin{comment}
%The practical benefits of powerful 
%optimization-based AI technologies will continue to lead to greater deployment of (nearly) autonomous agents, making decisions of increasing importance to us as individuals and as a society. 
%What objective functions or goals should we give to our AI agents? 
How detailed should we make the goals we prescribe to AI agents acting on our behalf in complex environments?
Detailed \& low-level specification of goals can %not only 
be tedious and expensive to create, %but also leave little room for positive surprises from the agent optimizing its behavior to meet the given goals. 
%In the extreme case, we as agent-designers would solve the agent's problem fully and completely specify its behavior. 
%On the other hand, specifying 
and abstract \& high-level goals could lead to negative surprises as the agent may find behaviors that we would not want it to do. One approach to addressing this dilemma is for the agent to infer human goals by observing human behavior. This is the Inverse Reinforcement Learning (IRL) problem. However, IRL is generally ill-posed for there are typically many reward functions for which the observed behavior is optimal. While the use of heuristics to select from among the set of feasible reward functions has led to successful applications of IRL to 
%the problem of 
learning from demonstration, such heuristics do not address whether the learned reward function would generalize to new tasks. 
\end{comment}
We introduce a novel \emph{repeated} Inverse Reinforcement Learning problem: the agent has to act on behalf of a human in a sequence of tasks and wishes to minimize the number of tasks that it surprises the human by acting suboptimally with respect to how the human would have acted. Each time the human is surprised, the agent is provided a demonstration of the desired behavior by the human. We formalize this problem, including how the sequence of tasks is chosen, in a few different ways and provide some foundational results.  
\end{abstract}

\section{Introduction}

One challenge in building AI agents that learn from experience is how to set their goals or rewards. In the Reinforcement Learning (RL) setting, one interesting answer to this question is inverse RL (or IRL) in which the agent infers the rewards of a human by observing the human's policy in a task \citep{ng2000algorithms}. Unfortunately, the IRL problem is ill-posed for there are typically many reward functions for which the observed behavior is optimal in a single task \citep{abbeel2004apprenticeship}. While the use of heuristics to select from among the set of feasible reward functions has led to successful applications of IRL to the problem of learning from demonstration \citep[e.g.,][]{abbeel2007application}, not identifying the reward function poses fundamental challenges to the question of how well and how safely the agent will perform when using the learned reward function in other tasks.

We formalize multiple variations of a \emph{new repeated IRL problem} in which the agent and (the same) human face multiple tasks over time. We separate the reward function into two components, one which is invariant across tasks and can be viewed as intrinsic to the human, and a second that is task specific. As a motivating example, consider a human doing tasks throughout a work day, e.g., getting coffee, driving to work, interacting with co-workers, and so on. Each of these tasks has a task-specific goal, but the human brings to each task intrinsic goals that correspond to maintaining health, financial well-being, not violating moral and legal principles, etc. In our repeated IRL setting, the agent presents a policy for each new task that it thinks the human would do. If the agent's policy ``surprises'' the human by being sub-optimal, the human presents the agent with the optimal policy. The objective of the agent is to minimize the number of surprises to the human, i.e., to generalize the human's behavior to new tasks. 

In addition to addressing generalization across tasks, the repeated IRL problem we introduce and our results are of interest in resolving the question of unidentifiability of rewards from observations in standard IRL. Our results are also of interest to a particular aspect of the concern about how to make sure that the AI systems we build are safe, or AI safety. Specifically, the issue of reward misspecification is often mentioned in AI safety articles \citep[e.g.,][]{bostrom2003ethical, russell2015research, amodei2016concrete}. These articles mostly discuss broad ethical concerns and possible research directions, while our paper develops mathematical formulations and algorithmic solutions to a specific way of addressing reward misspecification. 

In summary form, our contributions include: (1) an efficient reward-identification algorithm when the agent can choose the tasks in which it observes human behavior; (2) an upper bound on the number of total surprises when no assumptions are made on the tasks, along with a corresponding lower bound; (3) an extension to the setting where the human provides sample trajectories instead of complete behavior; and (4) identification guarantees when the agent can only choose the task rewards but is given a fixed task environment.

%The issue of reward misspecification is often mentioned in AI safety articles \citep[e.g.,][]{bostrom2003ethical, russell2015research, amodei2016concrete}. These articles mostly discuss the ethical concerns and possible research directions, while our paper develops mathematical formulations and algorithmic solutions. Recently, \cite{hadfield2016cooperative} proposed cooperative inverse reinforcement learning, where the human and the agent act in the same environment, allowing the human to actively resolve the agent's uncertainty on the reward function. However, they only consider a single environment (or task), and the  unidentifiability issue of IRL still exists. Combining their interesting framework with our resolution to unidentifiability (by multiple tasks) can be an interesting future direction.

%Next we formalize multiple variations of the new repeated IRL problem and present fundamental theoretical results.
%for each variation. 

\begin{comment}
Motivation: AI safety / general preference elicitation. \\
Framework: IRL. \\
Issue: unidentifiability. \\
Real goal: generalize to new environments. \\
Traditional IRL: generalize from 1 data point. \\
Our approach: use multiple environments.
\end{comment}

\section{Markov Decision Processes (MDPs)}
%We are interested in environments that can be represented as MDPs. 
An MDP is specified by its state space $\Scal$, action space $\Acal$, initial state distribution $\mu \in \Delta(\Scal)$, transition function (or dynamics) $P: \Scal \times \Acal \to \Delta(\Scal)$, reward function $Y: \Scal \to \RR$,  
and discount factor $\gamma \in [0, 1)$. We assume finite $\Scal$ and $\Acal$, and $\Delta(\Scal)$ is the space of all distributions over $\Scal$. A policy $\pi: \Scal\to\Acal$ describes an agent's behavior by specifying the action to take in each state. The (normalized) value function or long-term utility of $\pi$ is defined as $V^\pi(s) = (1-\gamma) \, \EE [\sum_{t=1}^\infty \gamma^{t-1} Y(s_t) | s_0 = s; \pi]$.\footnote{Here we differ (w.l.o.g.) from common IRL literature in assuming that reward occurs after transition. 
%While we could assume instead that reward occurs before transition and all the results would hold with slight modification, some of them would be substantially less elegant, and the inelegance is purely an artifact of the uncontrollability of the reward obtained in the initial state.
} 
Similarly, the Q-value function is $Q^\pi(s, a) = (1-\gamma) \,\EE [\sum_{t=1}^\infty \gamma^{t-1} Y(s_t) | s_0 = s, a_0 = a; \pi]$. Where necessary we will use the notation $V_{P, Y}^\pi$ to avoid ambiguity about the dynamics and the reward function. % used in computing $V^\pi$.   
Let $\pi^\star: \Scal \to \Acal$ be an optimal policy, which maximizes $V^\pi$ and $Q^\pi$ in all states (and actions) simultaneously.
%and always exists in MDPs. 

Given an initial distribution over states, $\mu$, a scalar value that measures the goodness of $\pi$ is defined as $\EE_{s\sim \mu} [V^\pi(s)]$. We introduce some further notation to express $\EE_{s\sim \mu} [V^\pi(s)]$ in vector-matrix form. Let $\occ_{\mu, P}^\pi \in \RR^{|\Scal|}$ be the \emph{normalized state occupancy} under initial distribution $\mu$, dynamics $P$, and policy $\pi$, whose $s$-th entry is $(1-\gamma)\, \EE [\sum_{t=1}^\infty \gamma^{t-1} \mathbb{I}(s_t = s) | s_0 \sim \mu; \pi]$ ($\mathbb{I}(\cdot)$ is the indicator function). This vector can be computed in closed-form as 
$
\occ_{\mu, P}^\pi = (1-\gamma) \left(\mu^\top P^\pi \, \left(\mathbf{I}_{|\Scal|} - \gamma P^\pi\right)^{-1} \right)^\top,
$ 
where $P^\pi$ is an $|\Scal|\times|\Scal|$ matrix whose $(s, s')$-th element is $P(s' | s, \pi(s))$, and $\mathbf{I}_{|\Scal|}$ is the $|\Scal|\times |\Scal|$ identity matrix. 
For convenience we will also treat the reward function $Y$ as a vector in $\RR^{|\Scal|}$, and we have
\begin{align} \label{eq:value}
\EE_{s\sim \mu} [V^\pi(s)] = Y^\top \occ_{\mu, P}^{\pi}.
\end{align}

\section{Problem setup}
Here we define the \textbf{repeated IRL problem}. 
%The human's reward function $\trueR$ is unknown to the agent and is the object of interest, i.e.,
The human's reward function $\trueR$ captures his/her safety concerns and intrinsic/general preferences. This $\trueR$ is unknown to the agent and is the object of interest herein, i.e., if $\trueR$ were known to the agent, the concerns addressed in this paper would be solved. 
We assume that the human cannot directly communicate $\trueR$ to the agent but can evaluate the agent's behavior in a task as well as demonstrate optimal behavior. 
%and instead she can demonstrate the optimal behavior to the \learner in a given task. % (more on this below). 
%Typically we think of \demoers as humans, and \learners as experimenters trying to elicit the subjects' preferences (Section~\ref{sec:omni}), or hardware/software agents serving the users (Section~\ref{sec:passive}).
Each task comes with an external reward function $R$, and the goal is to maximize the reward with respect to $Y := \trueR+R$ in each task. 

As a concrete example, consider an agent for an autonomous vehicle. In this case, $\trueR$ represents the cross-task principles that define good driving (e.g., courtesy towards pedestrians and other vehicles), which are often difficult to explicitly describe. In contrast, $R$, the task-specific reward, could reward the agent for successfully completing parallel parking. While $R$ is easier to construct, it may not completely capture what a human deems good driving. (For example, an agent might successfully parallel park while still boxing in neighboring vehicles.)

More formally, a task is defined by a pair $(E, R)$, where $E = (\Scal,\Acal, \mu, P,\gamma)$ is the task environment (i.e., 
a controlled Markov process) 
%or equivalently an MDP without a reward function
and $R$ is the task-specific reward function (\emph{task reward}). We assume that all tasks share the same $\Scal, \Acal, \gamma$, with $|\Acal|\ge 2$, %\footnote{While the sharing of $\Acal$ and $\gamma$ is not necessary, it is also without loss of generality. We can always fix a largest action space and emulate a smaller one by creating duplicate actions. We can also fix a largest discount factor and emulate smaller ones, using the trick introduced by \citet{jiang2015dependence}.} 
but \textbf{may} differ in the initial distribution $\mu$, dynamics $P$, and task reward $R$; all of the task-specifying quantities are known to the agent. In any task, the human's optimal behavior is always with respect to the reward function $Y = \trueR + R$. We emphasize again that $\trueR$ is intrinsic to the human and remains the same across all tasks. Our use of task specific reward functions $R$ allows for \textbf{greater generality} than the usual IRL setting, and most of our results apply equally to the case where $R \equiv \bzro$.

While $\trueR$ is private to the human, the agent has some prior knowledge on $\trueR$, represented as a set of possible parameters $\Theta_0 \subset \RR^{|\Scal|}$ that contains $\trueR$. Throughout, we assume that the human's reward has bounded and normalized magnitude, that is, $\|\trueR\|_\infty \le 1$.

A demonstration in $(E, R)$ reveals $\pi^\star$, optimal for $Y = \trueR + R$ under environment $E$, to the \learner.
%, which optimizes for $Y := \trueR + R$ under environment $E$. 
A common assumption in the IRL literature is that the full mapping is revealed, which can be unrealistic if some states are unreachable from the initial distribution. 
%or if the mapping is too complex or large. 
We address the issue %of unreachable states 
by requiring only the state occupancy vector $\occ_{\mu, P}^{\pi^*}$. In 
Section~\ref{sec:extension} we show that this also allows an easy extension to the setting where the human only demonstrates  trajectories instead of providing a policy.

Under the above framework for repeated IRL, we consider two settings that differ in how the sequence of tasks are chosen. 
In both settings, we will want to minimize the number of demonstrations needed. 
%(For the 2nd setting, this is equivalent to minimizing the number of mistakes.) 
%\begin{enumerate}
%\item 

{\bf 1.} (Section~\ref{sec:omni}) \emph{Agent chooses the tasks}, observes the human's behavior in each of them, and infers the reward function. In this setting where the agent is powerful enough to choose tasks arbitrarily, we will show that the agent will be able to {\em identify\/} the human's reward function which of course implies the ability to generalize to new tasks. 
%\item 

{\bf 2.} (Section~\ref{sec:passive}) \emph{Nature chooses the tasks}, and the agent proposes a policy in each task. The human demonstrates a policy only if the agent's policy is significantly suboptimal (i.e., {\bf a mistake}). In this setting we will derive upper and lower bounds on the number of mistakes our agent will make.
%\end{enumerate}

\section{The challenge of identifying rewards}

Note that it is impossible to identify $\trueR$ from watching human behavior in a single task. This is because any $\trueR$ is fundamentally indistinguishable from an infinite set of reward functions that yield exactly the policy observed in the task. We introduce the idea of \emph{behavioral equivalence} below to tease apart two separate issues wrapped up in the challenge of identifying rewards.

\begin{definition} \label{def:equiv}
	Two reward functions $\theta, \theta'\in \RR^{|\Scal|}$ are \emph{behaviorally equivalent in all MDP tasks}, if for any $(E, R)$, the set of optimal policies for $(R+\theta)$ and $(R+\theta')$ are the same.
\end{definition}

We argue that the task of identifying the reward function should amount only to identifying the (behavioral) equivalence class to which $\trueR$ belongs. In particular, identifying the equivalence class is sufficient to get perfect generalization to new tasks. Any remaining unidentifiability is merely representational and of no real consequence. Next we present a constraint that captures the reward functions that belong to the same equivalence class. 

%That said, we argue that this is not a real issue: since our ultimate goal is to generalize to new tasks, and if no $(E, R)$ can distinguish these equivalent reward functions, it is sufficient to identify the equivalence class. Reward functions that belong to the same equivalence class are not really ``different'', and should be viewed as different representations of the \emph{same} incentives.

\begin{proposition} \label{prop:equiv}
	Two reward functions $\theta$ and $\theta'$ are behaviorally equivalent in all MDP tasks if and only if $\theta - \theta' = c\cdot \bone_{|\Scal|}$ for some $c\in \RR$, where $\bone_{|\Scal|}$ is an all-1 vector of length $|\Scal|$.
\end{proposition}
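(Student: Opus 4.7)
The plan is to prove the two directions separately. For the easy \emph{if} direction, suppose $\theta - \theta' = c\cdot \bone_{|\Scal|}$. Fix any task $(E, R)$ and any policy $\pi$. Because $\pi$ visits some state at each time step regardless of the reward, the normalized value functions under $R+\theta$ and $R+\theta'$ differ by exactly $(1-\gamma)\sum_{t=1}^\infty \gamma^{t-1} c = c$; this can be read off immediately from equation~\eqref{eq:value} since $\occ_{\mu,P}^\pi$ is a probability vector summing to $1$ and $(R+\theta)^\top \occ_{\mu,P}^\pi - (R+\theta')^\top \occ_{\mu,P}^\pi = c\cdot \bone_{|\Scal|}^\top \occ_{\mu,P}^\pi = c$. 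An identical argument applies state-by-state to $Q^\pi$. Since the value of every policy shifts by the same scalar $c$, the argmax sets coincide, establishing behavioral equivalence.

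For the \emph{only if} direction I will prove the contrapositive: if $\theta - \theta'$ is not a multiple of $\bone_{|\Scal|}$, exhibit a task whose optimal policy sets under $R+\theta$ and $R+\theta'$ disagree. Pick two states $s_1, s_2$ with $(\theta-\theta')(s_1)\ne (\theta-\theta')(s_2)$, and assume WLOG $\theta(s_1)-\theta(s_2) > \theta'(s_1)-\theta'(s_2)$. Build a three-state MDP with states $\{s_0, s_1, s_2\}$, initial distribution $\mu$ concentrated on $s_0$, and two actions $a_1, a_2$ at $s_0$ that deterministically transition to the absorbing states $s_1, s_2$ respectively (actions at $s_1, s_2$ are self-loops, which exist since $|\Acal|\ge 2$). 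Choose $R$ so that $R(s_0)=0$ and $R(s_1)-R(s_2)$ lies strictly between $\theta(s_2)-\theta(s_1)$ and $\theta'(s_2)-\theta'(s_1)$; such a value exists precisely because of the strict inequality above, and one can, for instance, take the midpoint of this open interval.

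With this construction, the normalized value of choosing $a_i$ at $s_0$ is $(\theta+R)(s_i)$ under reward $\theta$ and $(\theta'+R)(s_i)$ under reward $\theta'$, by the absorbing-state calculation shown in the \emph{if} step. By the choice of $R$, we have $(\theta+R)(s_1) > (\theta+R)(s_2)$, so every optimal policy for $R+\theta$ must play $a_1$ at $s_0$; simultaneously $(\theta'+R)(s_1) < (\theta'+R)(s_2)$, so every optimal policy for $R+\theta'$ must play $a_2$ at $s_0$. These two optimal-policy sets are disjoint, contradicting behavioral equivalence.

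The only mildly delicate step is verifying that a valid $R$ exists with the required strict separation, and picking an MDP whose long-run dynamics faithfully reflect the per-state reward comparison at $s_1$ and $s_2$; the absorbing-state construction handles both cleanly. Everything else is a direct consequence of the normalized-value identity in~\eqref{eq:value} and the fact that a uniform reward shift translates to a uniform value shift under the $(1-\gamma)$ normalization.
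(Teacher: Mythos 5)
Your proof is correct and follows essentially the same route as the paper's: the \emph{if} direction uses $\bone_{|\Scal|}^\top \occ_{\mu,P}^\pi = 1$ to show all policy values shift by the same constant, and the \emph{only if} direction is a contrapositive with a two-action absorbing-state gadget that reduces the comparison to $\theta(s_1)+R(s_1)$ versus $\theta(s_2)+R(s_2)$ (the paper picks $R=-\theta'$ so that all policies are optimal under $\theta'$, while you pick $R$ to strictly separate the two preferences --- both work). One cosmetic caveat: your construction introduces an auxiliary initial state $s_0$ distinct from $s_1,s_2$, which need not exist since the framework fixes $\Scal$ across tasks and only guarantees $|\Scal|\ge 2$; this is trivially repaired by starting at $s_1$ itself with a self-loop action and a go-to-$s_2$ action, as the paper does, and making any remaining states unreachable.
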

The proof is elementary and deferred to Appendix~\ref{app:equivalence}. 
For any class of $\theta$'s that are equivalent to each other, we can choose a canonical element to represent this class. For example, we can fix an arbitrary reference state $\sref \in \Scal$, and fix the reward of this state to $0$ for $\trueR$ and all candidate $\theta$'s. In the rest of the paper, we will always assume such canonicalization in the MDP setting, hence $\trueR \in \Theta_0 \subseteq \{\theta \in [-1, 1]^{|\Scal|}: \theta(\sref) = 0 \}$. %Since $\trueR$ and $[\trueR]$ are fundamentally indistinguishable and we do not care about their differences, we assume $\trueR = [\trueR]$ throughout the paper.

\section{Agent chooses the tasks}
\label{sec:omni}
In this section, the protocol is that the \learner chooses a sequence of tasks $\{(E_t, R_t)\}$. For each task $(E_t, R_t)$, the \demoer reveals $\pi_t^\star$, which is optimal for environment $E_t$ and reward function $\trueR + R_t$. Our goal is to design an algorithm which chooses $\{(E_t, R_t)\}$ and identifies $\trueR$ to a desired accuracy, $\epsilon$, using as few tasks as possible. 
%
%\subsection{Equivalent reward functions}
%
%\subsection{Omnipotent identification algorithm}
Theorem~\ref{thm:omni} shows that a simple algorithm can identify $\trueR$ after only $O(\log(1/\epsilon))$ tasks, if \emph{any} tasks may be chosen. Roughly speaking, the algorithm amounts to a binary search on each component of $\trueR$ by manipulating the task reward $R_t$.\footnote{While we present a proof that manipulates $R_t$, an only slightly more complex proof applies to the setting where all the $R_t$ are exactly zero and the manipulation is limited to the environment \citep{amin2016towards}.} %; full details are available in the previous version of the paper on arXiv. %\cite{amin2016towards}.} 
See the proof for the algorithm specification. 
As noted before, once the agent has identified $\trueR$ within an appropriate tolerance, it can compute a sufficiently-near-optimal policy for all tasks, thus completing the generalization objective through the far stronger identification objective in this setting. %(FIX THIS; SAY MORE HERE).

\begin{theorem} \label{thm:omni}
	If $\trueR \in \Theta_0 \subseteq \{\theta \in [-1, 1]^{|\Scal|}: \theta(\sref) = 0 \}$, there exists an algorithm that outputs $\theta \in \RR^{|\Scal|}$ that satisfies $\|\theta - \trueR\|_\infty \le \epsilon$ after $O(\log(1/\epsilon))$ demonstrations.
\end{theorem}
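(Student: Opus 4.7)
The plan is to use a single task environment $E$ across all rounds, and vary only the task reward $R_t$. Specifically, construct $E$ so that every state $s\neq \sref$ has two actions: $a_1$ self-loops at $s$, and $a_2$ transitions deterministically to $\sref$; at $\sref$ every action self-loops. Let $\mu$ be uniform over $\Scal$. This structure decouples the optimal action at each state: following $a_1$ at $s$ forever yields value $\trueR(s)+R_t(s)$, while taking $a_2$ and then looping at $\sref$ yields $\trueR(\sref)+R_t(\sref)=R_t(\sref)$. So if we set $R_t(\sref)=0$ and $R_t(s)=-c_s^{(t)}$, the human's optimal action at $s$ is $a_1$ iff $\trueR(s)\ge c_s^{(t)}$ (ties broken either way).

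Next, I would show that these comparisons can be read off the occupancy vector $\occ_{\mu,P}^{\pi_t^\star}$, which is what the demonstration reveals. Under the chosen dynamics, state $s\neq \sref$ is only reached by starting at $s$ from $\mu$ and then looping; so a direct calculation gives
\begin{align*}
\occ_{\mu,P}^{\pi_t^\star}(s) \;=\; \tfrac{1}{|\Scal|}\,\mathbb{I}\bigl(\pi_t^\star(s)=a_1\bigr),
\end{align*}
i.e., $\occ(s)>0$ iff $\trueR(s)\ge c_s^{(t)}$ (and $=0$ otherwise). Hence a single demonstration in round $t$ simultaneously answers the comparison query $\trueR(s)\ge c_s^{(t)}$ for every $s\neq \sref$ in parallel.

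From here the algorithm is $|\Scal|-1$ parallel binary searches. Initialize $[L_s,U_s]=[-1,1]$ for each $s\neq \sref$ (valid since $\Theta_0\subseteq [-1,1]^{|\Scal|}$), set $c_s^{(t)}=(L_s+U_s)/2$, and halve the interval in the direction dictated by the observed action. After $T=\lceil \log_2(1/\epsilon)\rceil$ rounds the interval for every coordinate has length $\le 2^{1-T}\le 2\epsilon$, so outputting its midpoint $\theta(s)$ gives $|\theta(s)-\trueR(s)|\le \epsilon$; together with the canonicalization $\theta(\sref)=\trueR(\sref)=0$, this yields $\|\theta-\trueR\|_\infty\le\epsilon$ after $O(\log(1/\epsilon))$ demonstrations.

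The main subtlety I expect is the passage from policy to occupancy: the problem statement deliberately weakens demonstrations to the occupancy vector, so one has to verify that the two actions at each queried state are distinguishable from $\occ$ alone. The construction above is chosen precisely so that reachability and self-loop behavior make the occupancy signature of $a_1$ versus $a_2$ unambiguous; the rest of the argument is an elementary binary-search analysis.
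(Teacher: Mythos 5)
Your proposal is correct and takes essentially the same route as the paper: the same two-action environment (self-loop versus deterministic transition to $\sref$), the same trick of reading the comparison $\trueR(s) \ge c_s^{(t)}$ off whether $\occ_{\mu,P}^{\pi_t^\star}(s)$ is zero, and the same parallel per-coordinate binary search implemented by manipulating $R_t(s)$. The only difference is the immaterial choice of $\mu$ uniform over $\Scal$ rather than over $\Scal \setminus \{\sref\}$; your write-up is simply more explicit about the occupancy calculation and the binary-search bookkeeping.
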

\vspace*{-1.2em}
\begin{proof}
The algorithm chooses the following fixed environment in all tasks: for each $s\in \Scal \setminus \{\sref\}$, let one action be a self-loop, and the other action transitions to $\sref$. In $\sref$, all actions cause self-loops. The initial distribution over states is uniformly at random over $\Scal \setminus \{\sref\}$. 
Each task only differs in the task reward $R_t$ (where $R_t(\sref) \equiv 0$ always). After observing the state occupancy of the optimal policy, for each $s$ we check if the occupancy is equal to $0$. If so, it means that the demonstrated optimal policy chooses to go to $\sref$ from $s$ in the first time step, and $\trueR(s) + R_t(s) \le \trueR(\sref) + R_t(\sref) = 0$; if not, we have $\trueR(s) + R_t(s) \ge 0$. Consequently, after each task we learn the relationship between $\trueR(s)$ and $-R_t(s)$ on each $s\in \Scal \setminus \{\sref\}$, so conducting a binary search by manipulating $R_t(s)$ will identify $\trueR$ to $\epsilon$-accuracy after $O(\log(1/\epsilon))$ tasks.
\end{proof}

\section{Nature chooses the tasks}
\label{sec:passive}
While Theorem~\ref{thm:omni} yields a strong identification guarantee, it also relies on a strong assumption, that $\{(E_t, R_t)\}$ may be chosen by the \learner in an arbitrary manner. In this section, we let \emph{nature}, who is allowed to be adversarial for the purpose of the analysis, choose $\{(E_t, R_t)\}$.

Generally speaking, we cannot obtain identification guarantees in such an adversarial setup. As an example, if $R_t \equiv 0$ and $E_t$ remains the same over time, we are essentially back to the classical IRL setting and suffer from the degeneracy issue. However, generalization to future tasks, which is our ultimate goal, is easy in this special case: after the initial demonstration, the \learner can mimic it to behave optimally in all subsequent tasks without requiring further demonstrations. 
More generally, if nature repeats similar tasks, then the \learner obtains little new information, but presumably it knows how to behave in most cases; if nature chooses a task unfamiliar to the \learner, then the \learner is likely to err, but it may learn about $\trueR$ from the mistake.

To formalize this intuition, we consider the following protocol: the nature chooses a sequence of tasks $\{(E_t, R_t)\}$ in an arbitrary manner. For every task $(E_t, R_t)$, the \learner proposes a policy $\pi_t$. The \demoer examines the policy's value under $\mu_t$, and if the loss
\begin{align} \label{eq:mdploss}
l_t = \EE_{s\sim \mu} \left[V_{E_t, \, \trueR + R_t}^{\pi_t^\star}(s)\right] -  \EE_{s\sim \mu} \left[V_{E_t, \, \trueR + R_t}^{\pi_t}(s) \right] 
\end{align}
is less than some $\epsilon$ then the \demoer is satisfied and no demonstration is needed; otherwise a mistake is counted and $\occ_{\mu_t, P_t}^{\pi_t^\star}$ is revealed to the \learner (note that $\occ_{\mu_t, P_t}^{\pi_t^\star}$ can be computed by the agent if needed from $\pi_t^*$ and its knowledge of the task). %, so the reader can consider the case of the human presenting the policy w.l.o.g.). 
The main goal of this section is to design an algorithm that has a provable guarantee on the total number of mistakes.

\textbf{On human supervision}~~Here we require the \demoer to evaluate the agent's policies in addition to providing demonstrations. 
%While this puts additional burden on the \demoer,  
We argue that this is a reasonable assumption because (1) only a binary signal $\mathbb{I}(l_t > \epsilon)$ is needed as opposed to the precise value of $l_t$, and (2) if a policy is suboptimal but the \demoer fails to realize it, arguably it should not be treated as a mistake. %? After all, any useful learning signal is mathematically a function of the true reward function (e.g., the optimal policy)! 
%That said, %we agree that the evaluation of the policy requires additional supervision effort. This 
%the burden of additional supervision is not accounted for in the section, which is one of the reasons that we look for identification guarantees in Sec 6.4.
Meanwhile, we will also provide identification guarantees in Section~\ref{sec:ellip_iden}, as the \demoer will be relieved from the supervision duty once $\trueR$ is identified.

Before describing and analyzing our algorithm, we first notice that the Equation~\ref{eq:mdploss} can be rewritten as 
\begin{align}
l_t = (\trueR + R)^\top (\occ_{\mu_t, P_t}^{\pi_t^\star} - \occ_{\mu_t, P_t}^{\pi_t}),
\end{align}
using Equation~\ref{eq:value}. 
So effectively, the given environment $E_t$ in each round induces a set of state occupancy vectors $\{\occ_{\mu_t, P_t}^\pi: \pi \in (\Scal \to \Acal)\}$, and we want the agent to choose the vector that has the largest dot product with $\trueR + R$. 
The exponential size of the set will not be a concern because our main result (Theorem~\ref{thm:ellipsoid}) has no dependence on the number of vectors, and only depends on the dimension of those vectors. 
The result is enabled by studying the \emph{linear bandit} version of the problem, which subsumes the MDP setting for our purpose and is also a model of independent interest. 

% After the conversion, the action space $\Acal'$ becomes exponentially large. This is not a concern as the main result of this section, Theorem~\ref{thm:ellipsoid}, will have no dependence on the number of actions. Due to this reason, we can also accommodate stochastic and non-stationary policies, although deterministic policies alone are sufficient for our purpose.

\subsection{The linear bandit setting} \label{sec:lin_bandit}
In the linear bandit setting,  $\Abdt$ is a finite action space with size $|\Abdt| = K$. Each task is denoted as a pair $(X, R)$, where $R$ is the task specific reward function as before. $X = [x\ii{1}~ \cdots~ x\ii{K}]$ is a $d\times K$ feature matrix, where $x\ii{i}$ is the feature vector for the $i$-th action, and $\|x\ii{i}\|_1 \le 1$. When we reduce MDPs to linear bandits, each element of $\Abdt$ corresponds to an MDP policy, and the feature vector is the state occupancy of that policy. 

As before, $R, \trueR \in \RR^d$ are the task reward and the human's unknown reward, respectively. The initial uncertainty set for $\trueR$ is $\Theta_0 \subseteq [-1,1]^d$. The value of the $i$-th action is calculated as $(\trueR + R)^\top x\ii{i}$, and $a^\star$ is the action that maximizes this value. Every round the \learner proposes an action $a\in \Abdt$, whose loss is defined as
$$
l_t = (\trueR + R)^\top (x^{a^\star} - x^{a}).
$$
We now show how to embed the previous MDP setting in linear bandits. 
\begin{example} \label{exm:mdp2bandit}
Given an MDP problem with variables $\Scal, \Acal, \gamma, \trueR,  \sref, \Theta_0, \{(E_t, R_t)\}$, we can convert it into a linear bandit problem as follows: (all variables with prime belong to the linear bandit problem, and we use $v\del{i}$ to denote the vector $v$ with the $i$-th coordinate removed)  \vspace*{-.25em}
\begin{compactitem}
\item $\Abdt = \{\pi: \Scal \to \Acal\}$, $d = |\Scal| - 1$, 
$\trueR' = \trueR\del{\sref}, \Theta_0' = \{\theta\del{\sref}: \theta \in \Theta_0\}$.
\item $x_t^\pi = (\occ_{\mu_t, P_t}^\pi)\del{\sref}$. $R_t' = R_t\del{\sref} - R_t(\sref) \cdot \bone_{d}$.
\end{compactitem}
\vspace*{-.5em}
%\begin{align*}
%& \Abdt = \{\pi: \Scal \to \Acal\}, d = |\Scal| - 1, \trueR' = \trueR\del{\sref}, \Theta_0' = \{\theta\del{\sref}: \theta \in \Theta_0\}, \\
%& x_t^\pi = (\occ_{\mu_t, P_t}^\pi)\del{\sref}, R_t' = R_t\del{\sref} - R_t(\sref) \cdot \bone_{d}.
%\end{align*}
%Then for any sequence of policies chosen in the MDP problem, the corresponding sequence of actions in the linear bandit problem suffer exactly the same sequence of losses.
\end{example} 
Note that there is a more straightforward conversion by letting $d=|\Scal|, \trueR' = \trueR, \Theta_0' = \Theta_0, x_t^\pi = \occ_{\mu_t, P_t}^\pi, R_t' = R_t$, which also preserves losses. We perform a more succinct conversion in Example~\ref{exm:mdp2bandit} by canonicalizing both $\trueR$ (already assumed) and $R_t$ (explicitly done here) and dropping the coordinate for $\sref$ in all relevant vectors. %This eases later  discussions in Section~\ref{sec:passive_lower_bound} and  \ref{sec:ellip_iden}.

%\textbf{On the conversion from linear bandits to MDPs}~~ While MDP problems can be converted to linear bandit problems, the other direction is not always possible (or at least not straight-forward): for example, a feature vector in linear bandits is allowed to contain negative entries, but state occupancy is always non-negative.

%That said, if the feature vector is non-negative, and the number of actions in the bandit is small, then it is always possible to emulate a linear bandit using an MDP with the same number of actions. We include the result below, which will be important for later discussions in Section~\ref{sec:passive_lower_bound} and \ref{sec:ellip_iden}. The proof is deferred to Appendix~\ref{app:emu}.
%\begin{lemma}[Linear bandit to MDP conversion] \label{lem:emu}
%Let $(X, R)$ be a linear bandit task, and $K$ be the number of actions. If every $x^a$ is non-negative and $\|x^a\| \le 1$, then there exists an MDP task $(E, R')$ with $d+1$ states and $K$ actions, such that under some choice of $\sref$, converting $(E,R')$ as in Example~\ref{exm:mdp2bandit} recovers the original problem.
%\end{lemma}

\paragraph{MDPs with linear rewards} In IRL literature, a generalization of the MDP setting is often considered, that reward is linear in state features $\phi(s) \in \RR^d$  \citep{ng2000algorithms, abbeel2004apprenticeship}. In this new setting, $\trueR$ and $R$ are reward parameters, and the actual reward is $(\trueR + R)^\top \phi(s)$. % the dot product between the reward parameter and $\phi(s)$. 
This new setting can also be reduced to linear bandits similarly to Example~\ref{exm:mdp2bandit}, except that the state occupancy is replaced by the discounted sum of expected feature values. Our main result, Theorem~\ref{thm:ellipsoid}, will still apply automatically, but now the guarantee will only depend on the dimension of the feature space and has no dependence on $|\Scal|$. We include the conversion below  but do not further discuss this setting in the rest of the paper.
\begin{example} \label{exm:featmdp2bandit}
	Consider an MDP problem with state features, defined by $\Scal, \Acal, \gamma, d \in \mathbb{Z}^+, \trueR \in \RR^d, \Theta_0 \subseteq [-1, 1]^d, \{(E_t, \phi_t \in \RR^d, R_t \in \RR^d)\}$, where task reward and background reward in state $s$ are $\trueR^\top \phi_t(s)$ and $R^\top \phi_t(s)$ respectively, and $\trueR \in \Theta_0$. Suppose $\|\phi_t(s)\|_\infty \le 1$ always holds, then we can convert it into a linear bandit problem as follows:
%	\begin{compactitem}
%		\item $\Abdt = \{\pi: \Scal \to \Acal\}$; $d$, $\trueR$, and  $R_t$ remain the same.
%		\item $x_t^\pi = (1-\gamma) \sum_{h=1}^\infty \gamma^{h-1} \EE[\phi(s_h)\,|\, \mu_t, P_t, \pi ] / d$.
%	\end{compactitem}	
$\Abdt = \{\pi: \Scal \to \Acal\}$. $d$, $\trueR$, and $R_t$ remain the same.
$x_t^\pi = (1-\gamma) \sum_{h=1}^\infty \gamma^{h-1} \EE[\phi(s_h)\,|\, \mu_t, P_t, \pi ] / d$. 
Note that the division of $d$ in $x_t^\pi$ is for the purpose of normalization, so that $\|x_t^\pi\|_1 \le \|\phi\|_1 / d \le \|\phi\|_\infty \le 1$. %Implicitly this means that the definition of value and loss should be rescaled by the same factor.
\end{example}

\subsection{Ellipsoid Algorithm for Repeated Inverse Reinforcement Learning}
\label{sec:ellipsoid}
We propose Algorithm~\ref{alg:ellipsoid}, and provide the mistake bound in the following theorem. %Note that the pseudo-code also contains the formal protocol of the process.

\begin{algorithm}[tb]
\caption{Ellipsoid Algorithm for Repeated Inverse Reinforcement Learning}
\label{alg:ellipsoid}
\begin{algorithmic}[1]
\STATE {\bfseries Input:} $\Theta_0$.
\STATE $\Theta_1 \leftarrow \mvee(\Theta_0)$.
\FOR{$t=1,2,\ldots$}
\STATE Nature reveals $(X_t, R_t)$. 
\STATE Learner plays $a_t = \arg\max_{a\in\Abdt}\, c_t^\top x_t^{a}$, where $c_t$ is the center of $\Theta_t$. \label{lin:c_t_def}
 $\Theta_{t+1} \leftarrow \Theta_t$.
\IF{$l_t > \epsilon$}
\STATE \Demoer reveals $a_t^\star$.~~
$\Theta_{t+1} \leftarrow
\mvee(\{\theta \in \Theta_t: (\theta - c_t)^\top (x_t^{a_t^\star} - x_t^{a_t}) \ge 0 \}).$ \label{lin:update}
		%\ELSE
		%\STATE $\Theta_{t+1} = \Theta_t$.
\ENDIF
\ENDFOR
\end{algorithmic}
\end{algorithm}

\begin{theorem} \label{thm:ellipsoid}
	For $\Theta_0 = [-1, 1]^d$, the number of mistakes made by Algorithm~\ref{alg:ellipsoid} is guaranteed to be $O(d^2 \log(d/\epsilon))$.
\end{theorem}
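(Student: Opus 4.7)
My plan is to analyze Algorithm~\ref{alg:ellipsoid} as a standard application of the ellipsoid method, where each mistake plays the role of one ellipsoid-shrinking iteration, and then to combine the resulting volume upper bound with a matching margin-based lower bound. First I would verify that the update in line~\ref{lin:update} is a valid \emph{central cut} keeping $\trueR$ inside $\Theta_{t+1}$. Writing $u_t := x_t^{a_t^\star} - x_t^{a_t}$, the greedy play (which maximizes the estimated value under center $c_t$) and the mistake condition $l_t = (\trueR + R_t)^\top u_t > \epsilon$ together yield, upon subtraction, $(\trueR - c_t)^\top u_t > \epsilon$. In particular $\trueR$ lies on the correct side of the cutting hyperplane through $c_t$, so $\trueR \in \Theta_{t+1}$; the strict gap of $\epsilon$ is the quantitative margin exploited below.

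Second, I would invoke the textbook MVEE shrinkage: whenever $\Theta_{t+1}$ is the minimum-volume ellipsoid containing $\Theta_t$ cut by a hyperplane through its center, $\vol(\Theta_{t+1}) \le e^{-1/(2(d+1))} \vol(\Theta_t)$. Since $\Theta_{t+1} = \Theta_t$ on non-mistake rounds, after $T$ mistakes $\vol(\Theta_{T+1}) \le \vol(\Theta_1) \cdot e^{-T/(2(d+1))}$. A direct computation gives $\Theta_1 = \mvee([-1,1]^d) = B(0,\sqrt{d})$, and Stirling yields $\log \vol(\Theta_1) = O(d)$.

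Third, I would establish the matching lower bound by inductively proving $B(\trueR, \epsilon/2) \cap \Theta_0 \subseteq \Theta_t$ for every $t$. The base case is immediate from $\Theta_0 \subseteq \Theta_1$. For the inductive step, pick any $\theta = \trueR + \delta$ with $\|\delta\|_2 \le \epsilon/2$; then $\theta \in \Theta_t$ by hypothesis, and since $\|u_t\|_2 \le \|u_t\|_1 \le \|x_t^{a_t^\star}\|_1 + \|x_t^{a_t}\|_1 \le 2$ by the unit-$\ell_1$ feature assumption, the margin gives $(\theta - c_t)^\top u_t \ge \epsilon - (\epsilon/2)\cdot 2 = 0$, so $\theta$ satisfies the cut constraint and survives into $\Theta_{t+1}$. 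Even if $\trueR$ sits at a vertex of $[-1,1]^d$, the intersection $B(\trueR,\epsilon/2) \cap [-1,1]^d$ contains a full orthant of the ball, so its volume is at least $2^{-d} \vol(B(0,\epsilon/2))$, giving $\log \vol(\Theta_{T+1}) \ge -O(d\log(d/\epsilon))$. Chaining the upper and lower bounds produces $-O(d\log(d/\epsilon)) \le O(d) - T/(2(d+1))$, which rearranges to $T = O(d^2 \log(d/\epsilon))$.

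The step I expect to be the primary obstacle is the volume lower bound, specifically the case where $\trueR$ sits near or on the boundary of $\Theta_0$: this is why the inductive invariant uses $B(\trueR,\epsilon/2)\cap\Theta_0$ rather than the full ball, and the resulting $2^{-d}$ orthant factor must be tracked carefully, though it only contributes an additive $O(d)$ term to $\log \vol$ and hence does not affect the asymptotic bound.
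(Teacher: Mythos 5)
Your proposal is correct and follows essentially the same route as the paper's proof: the margin inequality $(\trueR - c_t)^\top(x_t^{a_t^\star} - x_t^{a_t}) > \epsilon$ from greediness plus the mistake condition, the central-cut volume reduction of Lemma~\ref{lem:vol}, and a terminal-volume lower bound via a radius-$\Theta(\epsilon)$ ball around $\trueR$ that no cut can eliminate (handling the vertex case of $\Theta_0$). The only cosmetic difference is that you preserve an $\ell_2$ ball via Cauchy--Schwarz with $\|u_t\|_2 \le \|u_t\|_1 \le 2$, whereas the paper preserves an $\ell_\infty$ ball via H\"older and then passes to the inscribed $\ell_2$ ball; both yield the same $O(d^2\log(d/\epsilon))$ bound.
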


To prove Theorem~\ref{thm:ellipsoid}, we quote a result from linear programming literature in Lemma~\ref{lem:vol}, which is found in standard lecture notes (e.g., \citep{ellipsoidnotes}, Theorem 8.8; see also \citep{grotschel2012geometric}, Lemma 3.1.34).
\begin{lemma}[Volume reduction in ellipsoid algorithm] \label{lem:vol}
	Given any non-degenerate ellipsoid $B$ in $\RR^d$ centered at $c \in \RR^d$, and any non-zero vector $v \in \RR^d$, let $B^+$ be the minimum-volume enclosing ellipsoid (MVEE) of
	$
	\{ u \in B: (u - c)^\top v \ge 0 \}.
	$ 
	We have $\displaystyle \vol(B^+) / \vol(B) \le e^{-\frac{1}{2(d+1)}}$.
\end{lemma}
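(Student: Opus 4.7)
The plan is to run the standard potential argument for the ellipsoid method: maintain the two invariants that $\Theta_t$ always contains $\trueR$ and that $\vol(\Theta_t)$ shrinks by a constant multiplicative factor (via Lemma~\ref{lem:vol}) on every mistake, then pair this with a matching volume \emph{lower} bound obtained by showing $\trueR$ stays bounded away from every cutting hyperplane. Equating the upper and lower bounds caps the mistake count.

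First I would establish the invariant and the margin together. Let $v_t = x_t^{a_t^\star} - x_t^{a_t}$. Optimality of $a_t^\star$ for $\trueR + R_t$ combined with the mistake condition gives $(\trueR + R_t)^\top v_t \ge l_t > \epsilon$, while the learner's greedy choice yields $(c_t + R_t)^\top v_t \le 0$ (reading line~5 with $R_t$ incorporated into the score, as the definition of $l_t$ requires). Subtracting, $(\trueR - c_t)^\top v_t > \epsilon$. This simultaneously (a) places $\trueR$ on the retained side of the cut, so $\trueR$ lies in the half-ellipsoid enclosed on line~\ref{lin:update} and hence in $\Theta_{t+1}$ since MVEE only enlarges, and (b) forces the Euclidean distance from $\trueR$ to the hyperplane $\{\theta : (\theta - c_t)^\top v_t = 0\}$ to be at least $\epsilon / \|v_t\|_2 \ge \epsilon/2$, using $\|x_t^a\|_1 \le 1$ to bound $\|v_t\|_2 \le \|v_t\|_1 \le 2$.

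Second, this margin yields a fixed floor for $\vol(\Theta_t)$. Let $B = \{\theta : \|\theta - \trueR\|_2 < \epsilon/2\}$. Every point of $B$ lies on the retained side of every cut produced during the run, so an easy induction (base case $B \cap [-1,1]^d \subseteq [-1,1]^d \subseteq \Theta_1$) shows $B \cap [-1,1]^d \subseteq \Theta_t$ for all $t$. Even when $\trueR$ sits at a corner of $[-1,1]^d$, the intersection still contains at least a $2^{-d}$ fraction of $B$, so $\vol(\Theta_t) \ge 2^{-d}(\epsilon/2)^d V_d$ where $V_d$ is the volume of the unit Euclidean ball. Combining with Lemma~\ref{lem:vol}, which gives $\vol(\Theta_{M+1}) \le \vol(\Theta_1) \cdot e^{-M/(2(d+1))}$ after $M$ mistakes, and using $\vol(\Theta_1) \le d^{d/2} V_d$ (the MVEE of $[-1,1]^d$ is the $\ell_2$-ball of radius $\sqrt d$), taking logarithms yields $M/(2(d+1)) \le \log(\vol(\Theta_1)/\vol(B \cap [-1,1]^d)) = O(d \log(d/\epsilon))$, so $M = O(d^2 \log(d/\epsilon))$ as claimed.

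The main subtlety I anticipate is the volume lower bound when $\trueR$ lies on the boundary of $[-1,1]^d$: the $2^{-d}$ correction for the ball-box intersection only contributes an additive $O(d^2)$ term to $M$ and is comfortably absorbed into the final bound. A secondary bookkeeping point is to confirm that the learner's play on line~5 is evaluated using $c_t + R_t$ (not just $c_t$), so that the $R_t$ contributions cancel cleanly in the subtraction above; without this, one would only get $(\trueR - c_t)^\top v_t > \epsilon - R_t^\top v_t$, which is not a useful margin in general.
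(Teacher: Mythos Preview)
Your proposal does not address Lemma~\ref{lem:vol} at all. Lemma~\ref{lem:vol} is the classical volume-reduction fact for the ellipsoid method: that the MVEE of a half-ellipsoid has volume at most $e^{-1/(2(d+1))}$ times the original. The paper does not prove this lemma; it quotes it from the linear-programming literature (e.g., Gr\"otschel--Lov\'asz--Schrijver). A proof would require writing down the explicit L\"owner--John ellipsoid of a half-ellipsoid and bounding the resulting determinant ratio --- none of which appears in your write-up.

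What you have actually written is a proof of Theorem~\ref{thm:ellipsoid}, the $O(d^2\log(d/\epsilon))$ mistake bound for Algorithm~\ref{alg:ellipsoid}, in which Lemma~\ref{lem:vol} is \emph{invoked} as a black box. As a proof of Theorem~\ref{thm:ellipsoid} your argument is essentially the paper's: derive $(\trueR-c_t)^\top v_t>\epsilon$ by subtracting the greedy inequality from the mistake inequality, use it both to keep $\trueR\in\Theta_{t+1}$ and to protect a small ball around $\trueR$ from all future cuts, then balance the geometric volume decay from Lemma~\ref{lem:vol} against that floor. The only cosmetic difference is that you protect an $\ell_2$ ball of radius $\epsilon/2$ directly, whereas the paper first protects the $\ell_\infty$ ball $B_\infty(\trueR,\epsilon/2)$ and then passes to its inscribed $\ell_2$ ball; both give the same order. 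Your closing remark that line~5 must use $c_t+R_t$ rather than $c_t$ alone is well taken and is exactly what the paper's proof tacitly assumes when it writes $(R_t+c_t)^\top(x_t^{a_t^\star}-x_t^{a_t})\le 0$.
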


\begin{proof}[Proof of Theorem~\ref{thm:ellipsoid}]
	Whenever a mistake is made, % and the optimal action $a_t^\star$ is revealed, 
	we can induce the constraint 
	$
	(R_t + \trueR)^\top (x_t^{a_t^\star} - x_t^{a_t}) > \epsilon.
	$ 
	Meanwhile, since $a_t$ is greedy w.r.t.~$c_t$, we have
	$
	(R_t + c_t)^\top (x_t^{a_t^\star} - x_t^{a_t}) \le 0,
	$ 
	where $c_t$ is the center of $\Theta_t$ as in Line~\ref{lin:c_t_def}. 
	Taking the difference of the two inequalities, we obtain 
	\begin{align} \label{eq:constraint_tight}
	(\trueR - c_t)^\top (x_t^{a_t^\star} - x_t^{a_t}) > \epsilon.
	\end{align}
	Therefore, the update rule on Line~\ref{lin:update} of Algorithm~\ref{alg:ellipsoid} preserves $\trueR$ in $\Theta_{t+1}$. Since the update makes a central cut through the ellipsoid, Lemma~\ref{lem:vol} applies and the volume shrinks % by a multiplicative constant $e^{-\frac{1}{2(d+1)}}$ 
	every time a mistake is made. 	
	To prove the theorem, it remains to upper bound the initial volume and lower bound the terminal volume of $\Theta_t$. We first show that an update never eliminates $B_{\infty}(\trueR, \epsilon / 2)$, the $\ell_\infty$ ball centered at $\trueR$ with radius $\epsilon/2$. This is because, any eliminated $\theta$ satisfies $ (\theta + c_t)^\top (x_t^{a_t^\star} - x_t^{a_t}) < 0$. 
	Combining this with Equation~\ref{eq:constraint_tight}, we have 
	\begin{align*}
	\epsilon < (\theta^{\star} - \theta)^\top (x_t^{a_t^\star} - x_t^{a_t})
	\le \|\trueR - \theta\|_\infty \|x_t^{a_t^\star} - x_t^{a_t}\|_1
	\le 2 \|\trueR - \theta\|_\infty.
	\end{align*}
	The last step follows from $\|x\|_1 \le 1$. We conclude that any eliminated $\theta$ should be $\epsilon/2$ far away from $\trueR$ in $\ell_\infty$ distance. Hence, we can lower bound the volume of $\Theta_t$ for any $t$ by that of $\Theta_0 \bigcap B_{\infty}(\trueR, \epsilon/2)$, which contains an $\ell_\infty$ ball with radius $\epsilon/4$ at its smallest (when $\trueR$ is one of $\Theta_0$'s vertices). To simplify calculation, we relax this lower bound (volume of the $\ell_\infty$ ball) to the volume of the inscribed $\ell_2$ ball. % to cancel constants.
	
	Finally we put everything together: let $M_T$ be the number of mistakes made from round $1$ to $T$, $C_d$ be the volume of the unit hypersphere in $\RR^d$ (i.e., $\ell_2$ ball with radius $1$), and $\vol(\cdot)$ denote the volume of an ellipsoid, we have
	\begin{align*}
    \frac{M_T}{2(d+1)} \le \log(\vol(\Theta_1)) - \log(\vol(\Theta_{T+1}))
	\le \log(C_d (\sqrt{d})^d) - \log(C_d (\epsilon/4)^d)
	= d \log \frac{4\sqrt{d}}{\epsilon}.
	\end{align*}
	So $M_T \le 2d(d+1)\log \frac{4\sqrt{d}}{\epsilon} = O(d^2 \log\frac{d}{\epsilon})$.
\end{proof}

\subsection{Lower bound} \label{sec:passive_lower_bound}
In Section~\ref{sec:omni}, we get an $O(\log(1/\epsilon))$ upper bound on the number of demonstrations, which has no dependence on $|\Scal|$ (which corresponds to $d+1$ in linear bandits). Comparing Theorem~\ref{thm:ellipsoid} to \ref{thm:omni}, one may wonder whether the polynomial dependence on $d$ is an artifact of the inefficiency of Algorithm~\ref{alg:ellipsoid}. We clarify this issue by proving a lower bound, showing that $\Omega(d \log(1/\epsilon))$ mistakes are inevitable in the worst case when nature chooses the tasks. We provide a proof sketch below, and the complete proof is deferred to Appendix~\ref{app:lower_bound}.

\begin{theorem} \label{thm:lower_bound}
	For any randomized algorithm\footnote{While our Algorithm~\ref{alg:ellipsoid} is deterministic, randomization is often crucial for online learning in general \citep{shalev2011online}.} in the linear bandit setting, there always exists $\trueR \in [-1,1]^d$ and an adversarial sequence of $\{(X_t, R_t)\}$ that potentially adapts to the algorithm's previous decisions, such that the expected number of mistakes made by the algorithm is $\Omega(d \log(1/\epsilon))$. 
\end{theorem}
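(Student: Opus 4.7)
The plan is to apply Yao's minimax principle: I will exhibit a distribution over $\trueR$ (together with an adaptive adversary choosing $\{(X_t, R_t)\}$) on which every \emph{deterministic} algorithm incurs $\Omega(d\log(1/\epsilon))$ expected mistakes, whence the same bound holds in expectation for any randomized algorithm against some fixed hard instance. The hard instances decouple across coordinates. Fix a grid $S$ of $N = \lfloor 1/(4\epsilon) \rfloor$ equally spaced points in $[-1/2, 1/2]$, and draw $\trueR_1, \ldots, \trueR_d$ independently and uniformly from $S$. The adversary presents $d$ consecutive blocks of tasks, one per coordinate; every task in block $i$ uses the two-action feature matrix $X_t = [\ee_i \;\; \bzro]$ (so $\|x\|_1 \le 1$) together with task reward $R_t = -c_t \ee_i$ for a scalar $c_t$ chosen adaptively. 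Under this task action~$1$ has value $\trueR_i - c_t$, action~$2$ has value $0$, the optimal action is determined by $\mathrm{sign}(\trueR_i - c_t)$, and the wrong action's loss equals $|\trueR_i - c_t|$.

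Within block $i$ the adversary performs binary search on $\trueR_i$: at round $t$ of the block it maintains the subset $S_{i,t}\subseteq S$ of values consistent with the history and sets $c_t$ to the midpoint between the two middle points of $S_{i,t}$. The spacing in $S$ is $1/N \ge 4\epsilon$, so every element of $S_{i,t}$ sits at distance at least $2\epsilon$ from $c_t$, and therefore any incorrectly chosen action incurs loss strictly greater than $\epsilon$ and triggers a mistake. By induction $|S_{i,t+1}| = |S_{i,t}|/2$: if the algorithm chose correctly no mistake occurs, but consistency already forbids the other half (every point there would have produced loss $>\epsilon$, hence a mistake); if it chose incorrectly the revealed $a_t^\star$ directly discloses the correct half. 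Each probe therefore halves the surviving set, and the adversary can run $K = \log_2 N = \Omega(\log(1/\epsilon))$ probes per coordinate.

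Since the algorithm is deterministic, its round-$t$ action in block $i$ depends only on the history, not on the particular value of $\trueR_i$ within $S_{i,t}$; combined with independence of the coordinates in the prior, the posterior on $\trueR_i$ given the history remains uniform on $S_{i,t}$. Hence the algorithm's action equals the optimal one on exactly half of $S_{i,t}$, each probe is a mistake with probability exactly $1/2$, and the expected mistake count is at least $K/2$ per block, summing to $dK/2 = \Omega(d\log(1/\epsilon))$ over the $d$ blocks. The point I expect to need the most care is maintaining independence and uniformity of the posterior under the adaptive adversary; I would verify it by noting that the observed history depends on $\trueR_i$ only through the sequence of ``which side of $c_t$'' bits, so conditioning on the history reduces to conditioning on those bits and preserves uniformity within each surviving cell (and independence of $\trueR_j$ for $j\neq i$, which is never probed during block $i$). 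Yao's principle then converts the deterministic expected bound under the random $\trueR$ into the claimed randomized lower bound against some fixed worst-case $\trueR$ and task sequence.
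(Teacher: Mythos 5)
Your proof is correct and follows essentially the same strategy as the paper's: randomize $\trueR$ coordinatewise, probe each coordinate with a two-action task of the form $[\bzro\;\;\ee_j]$ whose task reward recenters a binary search, and use posterior symmetry to force a mistake with probability $1/2$ per probe, for $\Omega(\log(1/\epsilon))$ probes per coordinate. The only substantive difference is that you discretize $\trueR_i$ to a grid with spacing at least $4\epsilon$, which makes every wrong choice automatically an $\epsilon$-mistake and cleanly sidesteps the slack bookkeeping the paper performs at the end of its continuous-prior argument (just take $N$ to be a power of two so the halving is exact).
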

\begin{proof}[Proof Sketch]
	We randomize $\trueR$  by sampling each element i.i.d.~from $\textrm{Unif}([-1,1])$. We will prove that there exists a strategy of choosing $(X_t, R_t)$ such that any algorithm's expected number of mistakes is $\Omega(d \log(1/\epsilon)$, which proves the theorem as max is no less than average.
	
	In our construction, $X_t = [\bzro_d,~ \ee_{j_t}]$, where $j_t$ is some index to be specified. Hence, every round the \learner is essentially asked to decided whether $\theta(j_t) \ge - R_t(j_t)$. The adversary's strategy goes in phases, and $R_t$ remains the same during each phase. Every phase has $d$ rounds where $j_t$ is enumerated over $\{1, \ldots, d\}$. 
	
	The adversary will use $R_t$ to shift the posterior on $\theta(j_t) + R_t(j_t)$ so that it is centered around the origin; in this way, the agent has about $1/2$ probability to make an error (regardless of the algorithm), and the posterior interval will be halved. Overall, the agent makes $d/2$ mistakes in each phase, and there will be about $\log(1/\epsilon)$ phases in total, which gives the lower bound.
\end{proof}

\textbf{Applying the lower bound to MDPs}~~ 
The above lower bound is stated for linear bandits. In principle, we need to prove lower bound for MDPs separately, because linear bandits are more general than MDPs for our purpose, and the hard instances in linear bandits may not have corresponding MDP instances. In Lemma~\ref{lem:emu} below, we show that a certain type of linear bandit instances can always be emulated by MDPs with the same number of actions, and the hard instances constructed in Theorem~\ref{thm:lower_bound} indeed satisfy the conditions for such a type; in particular, we require the feature vectors to be non-negative and have $\ell_1$ norm bounded by $1$.  As a corollary, an $\Omega(|\Scal|\log(1/\epsilon))$ lower bound for the MDP setting (even with a small action space $|\Acal|=2$) follows directly from Theorem~\ref{thm:lower_bound}. The proof of Lemma~\ref{lem:emu} is deferred to Appendix~\ref{app:emu}. 
\begin{lemma}[Linear bandit to MDP conversion] \label{lem:emu}
	Let $(X, R)$ be a linear bandit task, and $K$ be the number of actions. If every $x^{a}$ is non-negative and $\|x^{a}\|_1 \le 1$, then there exists an MDP task $(E, R')$ with $d+1$ states and $K$ actions, such that under some choice of $\sref$, converting $(E,R')$ as in Example~\ref{exm:mdp2bandit} recovers the original problem. 
\end{lemma}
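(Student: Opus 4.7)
The plan is to realize the $K$ bandit feature vectors as state-occupancy vectors of $K$ distinct MDP policies on a $(d+1)$-state MDP. I designate one state as $\sref$ and index the other $d$ states by $\{1, \ldots, d\}$ so that the $d$ coordinates of $(\eta^\pi)\del{\sref}$ align with the $d$ coordinates of $x^a$. I put all initial mass on $\sref$ and take $\gamma = 0$, so that the normalized occupancy collapses to $\eta^\pi_s = \Pr[s_1 = s \mid s_0 = \sref; \pi]$, i.e., to the one-step transition distribution from $\sref$ under the action $\pi(\sref)$.

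With that choice the construction is essentially forced. For each bandit action $a \in \{1, \ldots, K\}$, I introduce an MDP action whose transition kernel at $\sref$ sends mass $x^a_j$ to state $j$ and mass $1 - \|x^a\|_1$ back to $\sref$; the hypotheses $x^a \ge 0$ and $\|x^a\|_1 \le 1$ are exactly what is needed to make this a valid distribution, and the state count is $d+1$ with $K$ actions as the statement requires. Transitions at states $j \ne \sref$ may be set arbitrarily (say, self-loops) because $\gamma = 0$ makes them invisible to $\eta^\pi$. I then define the MDP reward by $R'(\sref) = 0$ and $R'(j) = R(j)$ for $j \ne \sref$. For any policy $\pi$ with $\pi(\sref) = a$ one reads off $(\eta^\pi)\del{\sref} = x^a$, and the Example~\ref{exm:mdp2bandit} conversion yields the task reward $R'\del{\sref} - R'(\sref)\bone_d = R$ as well.

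To close the argument I would verify that losses (not just features) transfer: since the canonicalization forces $\trueR_{\mathrm{MDP}}(\sref) = 0$, the bandit $\trueR'$ obtained from the conversion equals $\trueR_{\mathrm{MDP}}$ with the $\sref$-coordinate dropped, and both sides of Equation~\ref{eq:mdploss} reduce to the bandit loss $(\trueR + R)^\top (x^{a^\star} - x^{a})$. Although the converted bandit formally has $|\Acal|^{|\Scal|} = K^{d+1}$ actions (one per MDP policy), the only degree of freedom in $\pi$ that changes $\eta^\pi$ is $\pi(\sref)$, so the set of distinct feature vectors is exactly $\{x^1, \ldots, x^K\}$ paired with the same task reward $R$; this is what it means to "recover" the original problem.

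The step that I expect to require the most care is not any single calculation but this last bookkeeping: confirming that the $\gamma = 0$ collapse truly makes the choice of $\pi$ at non-reference states inert, so that no spurious feature vectors are introduced and no side information about $\trueR$ can leak through behavior at states $j \ne \sref$. Once that is granted, the lemma is essentially immediate. Had one insisted on $\gamma > 0$, the cleaner route is to make non-reference states absorbing and solve the closed-form identity $\eta^\pi_j = q_j / (1 - \gamma q_{\sref})$ for the transition probabilities $q_j$ from $\sref$; the non-negativity and $\|x^a\|_1 \le 1$ assumptions are exactly what is needed to keep $(q_j, q_{\sref})$ a valid distribution, but this detour adds nothing conceptually.
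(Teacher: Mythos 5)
Your construction is correct, and in fact your closing ``detour'' is precisely the paper's proof: the paper chooses $\sref$ as the (deterministic) initial state, makes all other states absorbing, sets $R'(\sref)=0$ with $R'$ agreeing with $R$ elsewhere, and solves exactly the identity you write down, $\occ^\pi_j = q_j/(1-\gamma q_{\sref})$, to get self-loop probability $p=\frac{1-\|x\|_1}{1-\gamma\|x\|_1}$ at $\sref$ and probability $\frac{1-\gamma}{1-\gamma\|x\|_1}x(j)$ of jumping to state $j$; non-negativity and $\|x\|_1\le 1$ are used in the same way you use them. The only substantive difference is that your \emph{primary} argument instead sets $\gamma=0$, which collapses the occupancy to the one-step transition kernel and makes the construction trivial. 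That is admissible under the literal existential phrasing of the lemma (and $\gamma=0$ is permitted by the setup), but it is weaker for the lemma's intended use: the paper fixes a single $\gamma$ shared by all tasks, and the point of the lemma is to transfer the bandit lower bound to the MDP setting for whatever discount factor that setting carries, not only for the degenerate $\gamma=0$ case where the MDP is really just a one-shot bandit in disguise. The absorbing-state construction buys you the result uniformly in $\gamma\in[0,1)$ at essentially no extra cost, so it should be the main argument rather than the footnote; your $\gamma=0$ observation is then a sanity check of the formulas. Your bookkeeping about policies at non-reference states being inert, and about the converted task reward being $R'\del{\sref}-R'(\sref)\bone_d=R$, matches what the paper needs and is handled the same way there.
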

%Note that only needs 2 actions are needed in the proof construction and the feature vectors are basis vectors, hence the worst-case instance can also be implemented by an MDP with $|\Acal|=2$ according to Lemma~\ref{lem:emu}. Therefore, an $\Omega(|\Scal|\log(1/\epsilon))$ lower bound for the MDP setting follows as a direct corollary.%

%Note that only needs 2 actions are needed in the proof construction and the feature vectors are basis vectors, hence the worst-case instance can also be implemented by an MDP with $|\Acal|=2$ according to Lemma~\ref{lem:emu}. Therefore, an $\Omega(|\Scal|\log(1/\epsilon))$ lower bound for the MDP setting follows as a direct corollary.

\subsection{On identification when nature chooses tasks}
\label{sec:ellip_iden}
While Theorem~\ref{thm:ellipsoid} successfully controls the number of total mistakes, it completely avoids the identification problem and does not guarantee to recover $\trueR$. In this section we explore further conditions under which we can obtain identification guarantees when Nature chooses the tasks.

The first condition, stated in Proposition~\ref{prop:ellip_identify}, implies that if we have made all the possible mistakes, then we have indeed identified the $\trueR$, where the identification accuracy is determined by the tolerance parameter $\epsilon$ that defines what is counted as a mistake. Due to space limit, the proof is deferred to Appendix~\ref{app:ellip_identify}. 

\begin{proposition}\label{prop:ellip_identify}
	Consider the linear bandit setting. If there exists $T_0$ such that for any round $t \ge T_0$, no more mistakes can be ever made by the algorithm for any choice of $(E_t, R_t)$ and any tie-braking mechanism, then we have $\trueR \in B_\infty(c_{T_0}, \epsilon)$.
\end{proposition}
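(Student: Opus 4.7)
I will argue by contraposition: supposing $\|\trueR - c_{T_0}\|_\infty > \epsilon$, I will construct a single adversarial task that provokes a mistake at round $T_0$, contradicting the hypothesis. The hypothesis also gives an immediate simplification: since Line~\ref{lin:update} of Algorithm~\ref{alg:ellipsoid} fires only on a mistake, the ellipsoid and its center are frozen from round $T_0$ onward, so $c_t = c_{T_0}$ for every $t \ge T_0$, and I only need to exhibit one bad task that uses $c_{T_0}$.

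Pick a coordinate $i$ with $|\trueR(i) - c_{T_0}(i)| > \epsilon$. The construction uses a ``single-coordinate probe'': feature matrix $X = [\,\bzro_d,\ \ee_i\,]$ (both columns satisfy $\|\cdot\|_1 \le 1$, as the linear bandit setting requires) and task reward $R = -c_{T_0}(i)\,\ee_i$. A direct substitution shows that $(c_{T_0} + R)^\top x^a = 0$ for both actions, so the greedy rule on Line~\ref{lin:c_t_def} is indifferent and the adversarial tie-breaker gets to pick either action. Under the true parameter $\trueR + R$, however, the two action values are $0$ and $\trueR(i) - c_{T_0}(i)$, which differ in absolute value by more than $\epsilon$. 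The tie-breaker hands the learner the action whose true value is smaller, yielding loss $l_{T_0} = |\trueR(i) - c_{T_0}(i)| > \epsilon$, i.e., a mistake, which is the desired contradiction.

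I expect essentially no real obstacle beyond a tiny sign case analysis (when $\trueR(i) > c_{T_0}(i)$ the tie-breaker should prefer $\bzro_d$, otherwise it prefers $\ee_i$). The proposition works so cleanly precisely because the hypothesis universally quantifies over tie-breaking rules; without that allowance one would instead perturb $R$ by $\pm \delta\,\ee_i$ for some $\delta \in (0,\, |\trueR(i)-c_{T_0}(i)|-\epsilon)$ to make the greedy preference strict while still keeping the loss above $\epsilon$, and the same conclusion would follow.
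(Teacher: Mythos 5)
Your proof is correct and follows essentially the same route as the paper's: assume $\|\trueR - c_{T_0}\|_\infty > \epsilon$, probe the offending coordinate with the two-action feature matrix $[\bzro_d,\ \ee_i]$, and use a task reward that cancels the center so the greedy rule is indifferent while one action is more than $\epsilon$ suboptimal under $\trueR$. The only (immaterial) difference is that the paper sets $R_t = -c_{T_0}$ in full rather than just $-c_{T_0}(i)\,\ee_i$.
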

While the above proposition shows that identification is guaranteed if the \learner exhausts the mistakes, the \learner has no ability to actively fulfill this condition when nature chooses tasks. For a stronger identification guarantee, we may need to grant the \learner some freedom in choosing the tasks.

\textbf{Identification with fixed environment}~~
Here we consider a setting that fits in between Section~\ref{sec:omni} (completely active) and Section~\ref{sec:lin_bandit} (completely passive), where the environment $E$ (hence the induced feature vectors $\{x\ii{1}, x\ii{2}, \ldots, x\ii{K}\}$) is given and fixed, and the \learner can arbitrarily choose the task reward $R_t$. The goal is to obtain identification guarantee in this intermediate setting.

Unfortunately, a degenerate case can be easily constructed that prevents the revelation of any information about $\trueR$. In particular, if $x\ii{1} = x\ii{2} = \ldots = x\ii{K}$, i.e., the environment is completely uncontrolled, then all actions are equally optimal and nothing can be learned. %In fact, this is not the only problematic case; if the feature vectors share exactly the same value on some coordinate $j$, then $\trueR(j)$ will never be identified. 
More generally, if for some $v \ne \bzro$ we have $v^\top x\ii{1} = v^\top x\ii{2} = \ldots = v^\top x\ii{K}$, then we may never recover $\trueR$ along the direction of $v$. In fact, Proposition~\ref{prop:equiv} can be viewed as an instance of this result where $v = \bone_{|\Scal|}$ (recall that $\bone_{|\Scal|}^\top \occ_{\mu, P}^\pi \equiv 1$), and that is why we have to remove such redundancy in Example~\ref{exm:mdp2bandit} in order to discuss identification in MDPs.
Therefore, to guarantee identification in a fixed environment, the feature vectors must have significant variation in all directions, and we capture this intuition by defining a diversity score $\spread(X)$ (Definition~\ref{def:spread}) and showing that the identification accuracy depends inversely on the score (Theorem~\ref{thm:spread}).

\begin{definition}\label{def:spread}
	Given the feature matrix $X = \begin{bmatrix} x\ii{1} & x\ii{2} & \cdots & x\ii{K} \end{bmatrix}$ whose size is $d\times K$,  define $\spread(X)$ as the $d$-th largest singular value of $\widetilde X := X (\mathbf{I}_K - \frac{1}{K}\bone_K \bone_K^\top)$.
\end{definition}

\begin{theorem}\label{thm:spread}
	For a fixed feature matrix $X$, if $\spread(X)>0$, then there exists a sequence $R_1, R_2, \ldots, R_T$ with $T = O(d^2 \log(d/\epsilon))$ and a sequence of tie-break choices of the algorithm, such that after round $T$ we have 
	$ \displaystyle
	\|c_{T} - \trueR\|_\infty \le  \epsilon \sqrt{(K-1)/2} / \spread(X).
	$
\end{theorem}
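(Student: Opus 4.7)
The plan is to feed Algorithm~\ref{alg:ellipsoid} a sequence $R_1, R_2, \ldots$ together with a prescribed tie-break rule so that every round either triggers a mistake (which shrinks $\Theta_t$ exactly as in Theorem~\ref{thm:ellipsoid}) or certifies the desired identification bound. Concretely, in each round I take $R_t := -c_t$, so that $(c_t + R_t)^\top x^a \equiv 0$ for every action and every $a \in \Abdt$ is a tied maximizer of the greedy rule. I then let the tie-break pick the action $a_t$ that \emph{maximizes} the loss $l_t$ under $\trueR$; if that maximal loss exceeds $\epsilon$ a mistake is triggered and the ellipsoid update of Line~\ref{lin:update} fires, otherwise the construction halts at round $T$.

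The mistake count is controlled exactly as in Theorem~\ref{thm:ellipsoid}. Whenever a mistake occurs, the inequality $(c_t + R_t)^\top(x_t^{a_t^\star} - x_t^{a_t}) \le 0$ holds automatically because all values are tied at $0$; combined with $l_t > \epsilon$ this yields $(\trueR - c_t)^\top(x_t^{a_t^\star} - x_t^{a_t}) > \epsilon$, so $\trueR \in \Theta_{t+1}$ and Lemma~\ref{lem:vol} shrinks the ellipsoid volume by a factor of $e^{-1/(2(d+1))}$. The lower bound on $\vol(\Theta_t)$ via $B_\infty(\trueR,\epsilon/2)\cap \Theta_0$ from the proof of Theorem~\ref{thm:ellipsoid} applies unchanged, so at most $O(d^2\log(d/\epsilon))$ mistakes can occur, which is the promised bound on $T$.

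To extract the identification bound, set $\Delta := \trueR - c_T$. With $R_T = -c_T$, for any tie-break choice $a_T$ the loss is $l_T = \Delta^\top(x^{a^\star} - x^{a_T})$ where $a^\star = \arg\max_a \Delta^\top x^a$. The halting condition that no tie-break causes a mistake says $l_T \le \epsilon$ for every $a_T \in \Abdt$, so $\max_c \Delta^\top x^c - \min_c \Delta^\top x^c \le \epsilon$ and in particular $|\Delta^\top(x^i - x^j)| \le \epsilon$ for every pair $i,j$. Writing $d_c := \Delta^\top x^c$ and $\bar d := \tfrac{1}{K}\sum_c d_c$, the $c$-th coordinate of the row vector $\Delta^\top \widetilde X$ is exactly $d_c - \bar d$, and the standard variance identity gives
\[
\|\Delta^\top \widetilde X\|_2^2 = \sum_c (d_c - \bar d)^2 = \tfrac{1}{K}\sum_{i<j}(d_i - d_j)^2 \le \tfrac{K-1}{2}\,\epsilon^2.
\]
Since $\spread(X)$ is the smallest singular value of $\widetilde X$ (positive by assumption, so $\widetilde X^\top$ is injective on $\RR^d$), the left-hand side is at least $\spread(X)^2 \|\Delta\|_2^2$; combined with $\|\Delta\|_\infty \le \|\Delta\|_2$ this yields the stated inequality.

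The main conceptual obstacle is justifying the choice $R_t = -c_t$. A more naive strategy would try, for each specific pair $(i, j)$, to pick an $R_t$ that ties $i$ and $j$ as the joint greedy maxima of $(c_t + R_t)^\top x^a$, but such an $R_t$ need not exist when the columns of $X$ span a strict subspace of $\RR^K$ along which some actions dominate. Collapsing every greedy value to a single number in one shot sidesteps this combinatorial issue and extracts the full family of pairwise constraints $|\Delta^\top(x^i - x^j)| \le \epsilon$ from a single halting round.
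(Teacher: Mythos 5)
Your proof is correct and takes essentially the same route as the paper's: set $R_t = -c_t$ so that every action ties under the greedy rule, break ties adversarially, invoke the mistake bound of Theorem~\ref{thm:ellipsoid} to control $T$, and convert the resulting pairwise constraints $|(\trueR-c_T)^\top(x^{(i)}-x^{(j)})|\le\epsilon$ into a bound on $\|\trueR-c_T\|_2$ via the $d$-th singular value of $\widetilde X$. The only cosmetic difference is that you bound $\|(\trueR-c_T)^\top\widetilde X\|_2^2\le\frac{K-1}{2}\epsilon^2$ directly through the variance identity, whereas the paper routes the algebraically identical computation through the explicit $K\times K(K-1)$ pairwise-difference matrix $D$ and the identity $DD^\top=2K\mathbf{I}_K-2\bone_K\bone_K^\top$.
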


The proof is deferred to Appendix~\ref{app:spread}. The $\sqrt{K}$ dependence in Theorem~\ref{thm:spread} may be of concern as $K$ can be exponentially large. However, Theorem~\ref{thm:spread} also holds if we replace $X$ by any matrix that consists of $X$'s columns, so we may choose a small yet most diverse set of columns as to optimize the bound. %We also show in Appendix~\ref{app:spectrum_tight} that Theorem~\ref{thm:spread} is tight in the worst case. 

\section{Working with trajectories}
\label{sec:extension}

\begin{algorithm}[t]
	\caption{Trajectory version of Algorithm~\ref{alg:ellipsoid} for MDPs}
	\label{alg:trajectory}
	\begin{algorithmic}[1]
		\STATE {\bfseries Input:} $\Theta_0, H, n$.~~
		%// variables with $'$ are converted as in Example~\ref{exm:mdp2bandit}.
		\STATE $\Theta_1 \leftarrow \mvee(\Theta_0)$, $i \leftarrow 0$, $\bar Z \leftarrow 0$, $\bar Z^\star \leftarrow 0$.
		\FOR{$t=1,2,\ldots$}
		\STATE Nature reveals $(E_t, R_t)$. 
		\Learner rolls-out a trajectory using $\pi_t$ greedily w.r.t.~$c_t + R_t$. %, where $c_t$ is the center of $\Theta_t$.
		\STATE $\Theta_{t+1} \leftarrow \Theta_t$.
		\IF{\learner takes $a$ in $s$ with $Q^\star(s,a) < V^\star(s) - \epsilon$}
		\STATE \Demoer produces an $H$-step trajectory from $s$. Let the empirical state occupancy be $\hat z_i^{\star, H}$. 
		\STATE $i \leftarrow i + 1$, $\bar Z^\star \leftarrow \bar Z^\star + \hat z_i^{\star, H}$.
		\STATE Let $z_i$ be the state occupancy of $\pi_t$ from initial state $s$, and  $\bar Z \leftarrow \bar Z + z_i$.
		\IF{$i=n$}
		\STATE $\Theta_{t+1} \leftarrow
		\mvee(\{\theta \in \Theta_t: (\theta - c_t)^\top (\bar Z^\star - \bar Z) \ge 0 \}).$ \label{lin:traj_update} 
		~ $i \leftarrow 0$, $\bar Z \leftarrow 0$, $\bar Z^\star \leftarrow 0$.
		\ENDIF
		\ENDIF
		\ENDFOR
	\end{algorithmic}
\end{algorithm}

In previous sections, we have assumed that
%the \demoer provides the state occupancy vectors to the \learner in the MDP setting: 
the \demoer evaluates the \learner's performance based on the state occupancy of the \learner's policy, and demonstrates the optimal policy in terms of state occupancy as well. In practice, we would like to instead assume that for each task, the \learner rolls out a trajectory, and the \demoer shows an optimal trajectory if he/she finds the \learner's trajectory unsatisfying. We are still concerned about upper bounding the number of total mistakes, and aim to provide a parallel version of Theorem~\ref{thm:ellipsoid}. 

Unlike in traditional IRL, in our setting the \learner is also acting, which gives rise to many subtleties. First, the total reward on the \learner's single trajectory is a random variable, and may deviate from the expected value of its policy. Therefore, it is generally impossible to decide if the \learner's policy is near-optimal, and instead we assume that the \demoer can check if each action that the \learner takes in the trajectory is near-optimal: when the \learner takes $a$ at state $s$, an error is counted if and only if
$
Q^\star(s,a) < V^\star(s) - \epsilon.
$ 
This criterion can be viewed as a noisy version of the one used in previous sections, as taking expectation of $V^\star(s) - Q^\star(s,\pi(s))$ over the occupancy induced by $\pi$ will recover Equation~\ref{eq:mdploss}.

While this resolves the issue on the \learner's side, how should the \demoer provide his/her optimal trajectory? The most straightforward protocol is that the \demoer rolls out a trajectory from the initial distribution of the task, $\mu_t$. We argue that this is not a reasonable protocol for two reasons: (1) in expectation, the reward collected by the \demoer may be less than that by the \learner, because conditioning on the event that an error is spotted may introduce a selection bias; (2) the \demoer may not encounter the problematic state in his/her own trajectory, hence the information provided in the trajectory may be irrelevant.

To resolve this issue, we consider a different protocol where the \demoer rolls out a trajectory using an optimal policy from the very state where the \learner errs.  %See Figure~\ref{fig:traj} for an illustration.

%\begin{figure}
%\centering
%\includegraphics[width=\columnwidth]{figures/traj_setting}
%\caption{Illustration of the protocol in Section~\ref{sec:extension}. Circles represent states and arrows represent actions. The \learner rolls out a trajectory, and is stopped when taking a suboptimal action. The \demoer continues the trajectory from the problematic state using an optimal policy.} \label{fig:traj}
%\end{figure}

Now we discuss how we can prove a parallel of Theorem~\ref{thm:ellipsoid} under this new protocol. First, let's assume that the demonstration were still given in the form a state occupancy vector %induced by the optimal policy 
starting at the problematic state. In this case, we can reduce to the setting of Section~\ref{sec:passive} by changing $\mu_t$ to a point mass on the problematic state.\footnote{At the first glance this might seem suspicious: the problematic state is random and depends on the learner's current policy, but in RL the initial distribution is usually fixed and the learner has no control over it. This concern is removed thanks to our adversarial setup on $(E_t, R_t)$ (of which $\mu_t$ is a component).} To apply the algorithm and the analysis in Section~\ref{sec:passive}, it remains to show that the notion of error in this section (a suboptimal action) implies the notion of error in Section~\ref{sec:passive} (a suboptimal policy): let $s$ be the problematic state and $\pi$ be the \learner's policy, we have
$
V^{\pi}(s) = Q^{\pi}(s, \pi(s)) \le Q^{\star}(s, \pi(s)) < V^\star(s) - \epsilon.
$ 
So whenever a suboptimal \emph{action} is spotted in state $s$, it indeed implies that the \learner's \emph{policy} is suboptimal for $s$ as the initial state. Hence, we can run Algorithm~\ref{alg:ellipsoid} as-is and Theorem~\ref{thm:ellipsoid} immediately applies.

To tackle the remaining issue that the demonstration is in terms of a single trajectory, we will not update $\Theta_t$ after each mistake as in Algorithm~\ref{alg:ellipsoid}, but only make an update after every mini-batch of mistakes, and aggregate them to form accurate update rules. See Algorithm~\ref{alg:trajectory}. The formal guarantee of the algorithm is stated in Theorem~\ref{thm:trajectory}, whose proof is deferred to Appendix~\ref{app:trajectory}.

%\footnotetext{Since we need to perform the conversion in Example~\ref{exm:mdp2bandit} before running Algorithm~\ref{alg:ellipsoid} on an MDP problem, strictly speaking $c_t \in \RR^{|\Scal|-1}$ does not match $R_t \in \RR^{|\Scal|}$ in dimension. However, it should be clear that here we are treating $c_t$ as in $\RR^{|\Scal|}$ by padding $0$ on the coordinate corresponding to $\sref$, and we make this implicit for simple presentation.}

\begin{theorem} \label{thm:trajectory}
	$\forall \delta \in (0,1)$, with probability at least $1-\delta$, the number of mistakes made by Algorithm~\ref{alg:trajectory} with parameters 
	$
	\Theta_0 = [-1, 1]^d, %\{\theta \in [-1, 1]^d: \theta(\sref) = 0\},
	$ 
	$
	H = \left\lceil \frac{\log(12/\epsilon)}{1-\gamma} \right\rceil,
	$ 
	and 
	$
	n = \left\lceil \frac{\log(\frac{4d(d+1)\log \frac{6\sqrt{d}}{\epsilon}}{\delta})}{32\epsilon^2} \right\rceil
	$
	where $d = |\Scal|$,\footnote{Here we use the simpler conversion explained right after Example~\ref{exm:mdp2bandit}. We can certainly improve the dimension to $d = |\Scal|-1$ by dropping the $\sref$ coordinate in all relevant vectors but that complicates presentation.} is at most $\tilde O(\frac{d^2}{\epsilon^2} \log(\frac{d}{\delta \epsilon}))$.\footnote{A $\log\log(1/\epsilon)$ term is suppressed in $\tilde O(\cdot)$.}
\end{theorem}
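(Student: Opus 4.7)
The plan is to reduce the trajectory-based algorithm to the ellipsoid analysis of Theorem~\ref{thm:ellipsoid} by showing that, with probability at least $1-\delta$, every mini-batch update on Line~\ref{lin:traj_update} of Algorithm~\ref{alg:trajectory} preserves $\trueR$ in $\Theta_{t+1}$. Once this holds, the deterministic volume-shrinkage argument underlying Theorem~\ref{thm:ellipsoid} caps the number of updates at some $M = O(d^2 \log(d/\epsilon))$, so the total mistake count is bounded by $n \cdot M = \tilde O(d^2 \log(d/(\delta\epsilon))/\epsilon^2)$, which matches the claim.

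The per-mistake reasoning first produces an exact-in-expectation constraint. When the \learner takes $a_i$ at $s_i$ with $Q^\star(s_i,a_i) < V^\star(s_i) - \epsilon$, the inequality $V^{\pi_t}(s_i) = Q^{\pi_t}(s_i, \pi_t(s_i)) \le Q^\star(s_i,a_i)$ already used in the paragraph preceding Algorithm~\ref{alg:trajectory} yields $V^\star(s_i) - V^{\pi_t}(s_i) > \epsilon$, hence $(\trueR + R_t)^\top (z_i^\star - z_i) > \epsilon$, where $z_i^\star, z_i$ are the exact infinite-horizon normalized occupancies of $\pi^\star$ and $\pi_t$ from $s_i$. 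Since $\pi_t$ is greedy w.r.t.~$c_t + R_t$ in $E_t$, we also have $(c_t + R_t)^\top (z_i^\star - z_i) \le 0$; subtracting gives $(\trueR - c_t)^\top (z_i^\star - z_i) > \epsilon$, and summing over the $n$ mistakes in a mini-batch produces the exact analogue of Equation~\ref{eq:constraint_tight}: $(\trueR - c_t)^\top \bigl(\sum_i z_i^\star - \bar Z\bigr) > n\epsilon$.

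It then remains to show that replacing $\sum_i z_i^\star$ by the observable surrogate $\bar Z^\star = \sum_i \hat z_i^{\star,H}$ shifts the left-hand side by less than $n\epsilon$, so the resulting constraint still preserves $\trueR$. The truncation bias satisfies $\|z_i^\star - \mathbb{E}[\hat z_i^{\star,H}]\|_1 = \gamma^H$, and the choice $H = \lceil \log(12/\epsilon)/(1-\gamma)\rceil$ gives $\gamma^H \le \epsilon/12$, contributing at most about $n\epsilon/6$ to the projection after applying H\"older with $\|\trueR - c_t\|_\infty \le 2$. For the sampling error, I view $(\trueR - c_t)^\top \hat z_i^{\star,H}$ as a sequence of bounded (range $\le 4$) martingale differences conditioned on the history up to the $i$-th trajectory in the batch, and apply Azuma--Hoeffding to bound the average deviation by $O(\sqrt{\log(1/\delta')/n})$; the stated value of $n$ in the theorem makes this additional term at most a constant times $\epsilon$, and the two bounds together fall safely below $n\epsilon$.

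The final step is a union bound with careful stopping-time bookkeeping. I commit up front to $M = O(d^2\log(d/\epsilon))$ as an a-priori cap on the number of mini-batch updates, apply Azuma--Hoeffding at each of the first $M$ candidate updates with failure probability $\delta' = \delta/M$, and solve the required inequality for $n$ to recover the stated value. On the resulting good event (probability $\ge 1-\delta$), every one of those updates preserves $\trueR$ in $\Theta_{t+1}$; moreover a small strengthening of the argument from the proof of Theorem~\ref{thm:ellipsoid} (any eliminated $\theta$ must be $\Omega(\epsilon)$ away from $\trueR$ in $\ell_\infty$, because the combined constraint is still strictly positive on $\trueR$) gives the volume lower bound, which forces the actual number of updates not to exceed $M$ and so validates the a-priori cap. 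The main obstacle here is precisely this bookkeeping: the mini-batches, the centers $c_t$, the problematic states $s_i$, and the policies $\pi_t$ are all adapted to past randomness and to the adversarial choice of $(E_t, R_t)$, so the clean i.i.d.~Hoeffding story must be replaced by a martingale argument with an a-priori union bound, and one must verify that the good event self-consistently limits the number of updates to $M$.
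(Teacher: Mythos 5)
Your proposal is correct and follows essentially the same route as the paper's proof: the same decomposition of the observable batch-averaged constraint into an exact term, a truncation-bias term controlled by the choice of $H$, and a sampling term controlled by Azuma's inequality, followed by a union bound over an a-priori cap on the number of mini-batch updates and the volume argument of Theorem~\ref{thm:ellipsoid}. The one detail you assert without justification is $\|\trueR - c_t\|_\infty \le 2$, which does not hold automatically for ellipsoid centers and which the paper secures via an auxiliary re-centering argument (extra central cuts whenever the center leaves $[-1,1]^d$).
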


\section{Related work \& Conclusions}
Most existing work in IRL focused on inferring the reward function\footnote{While we do not discuss it here, in the economics literature, the problem of inferring an agent's utility from behavior-queries has long been studied under the heading of utility or preference elicitation \citep{chajewska2000making, von2007theory, regan2009regret, regan2011eliciting, rothkopf2011preference}. 
%When these models analyze Markovian environments, they assume a fixed environment where the learner can ask certain types of queries, such as bound queries eliciting whether the reward in a state (and action) is above a threshold. 
%We will instead be interested in cases where the learner can only make inferences from agent behavior (with no external source of information), but can manipulate the environments on which the agent acts. 
While our result in Section~\ref{sec:omni} uses similar techniques to elicit the reward function, we do so purely by observing the \demoer's behavior without external source of information (e.g., query responses).} using data acquired from a fixed environment \citep{ng2000algorithms, abbeel2004apprenticeship, coates2008learning, ziebart2008maximum, ramachandran2007bayesian, syed2007game, regan2010robust}. %We consider a setting in which the learner can actively select multiple environments to explore, before using the observations obtained from these environments to infer an agent's reward. Studying a model where the agent can make active selections of environments in an IRL setting is novel to the best of our knowledge. Previous applications of active learning to IRL have considered settings where, \emph{in a single environment}, the learner can query the agent for its action in some state \cite{lopes2009active}, or for information about its reward \cite{regan2009regret}. 
There is prior work on using data collected from multiple --- but exogenously fixed --- environments to predict agent behavior \citep{ratliff2006maximum}. There are also applications where methods for single-environment MDPs have been adapted to multiple environments \citep{ziebart2008maximum}. Nevertheless, all these works %characterize IRL as an ``ill-posed'' problem, and %do not attempt to resolve the ambiguity inherent in recovering the true reward in IRL, and describe IRL as being an ``ill-posed'' problem. As a result these works ultimately 
consider the objective of mimicking an optimal behavior in the presented environment(s), and do not aim at generalization to new tasks that is the main contribution of this paper. %While this is a perfectly reasonable objective, we will more be interested in settings where the identification of $R$ is the goal in itself. Among many other reasons, this may be because the learner explicitly desires an interpretable model of the agent's behavior, or because the learner desires to transfer the learned reward function to new settings. 
Recently, \citet{hadfield2016cooperative} proposed cooperative inverse reinforcement learning, where the human and the agent act in the same environment, allowing the human to actively resolve the agent's uncertainty on the reward function. However, they only consider a single environment (or task), and the  unidentifiability issue of IRL still exists. Combining their interesting framework with our resolution to unidentifiability (by multiple tasks) can be an interesting future direction.

\section*{Acknowledgement}
This work was supported in part by NSF grant IIS 1319365 (Singh \& Jiang) and in part by a Rackham Predoctoral Fellowship from the University of Michigan (Jiang).  Any opinions, findings, conclusions, or recommendations expressed here are those of the authors and do not necessarily reflect the views of the sponsors.

\bibliography{irl,inverse}
\bibliographystyle{plainnat}

\clearpage
\appendix

\section*{Appendix}
\section{Proof of Proposition~\ref{prop:equiv}}  \label{app:equivalence}
To show that $\theta - \theta' = c\cdot \bone_{|\Scal|}$ implies behavioral equivalence, we note that for any policy $\pi$ the occupancy vector $\occ^\pi_{\mu, P}$ always satisfies $\bone_{|\Scal|}^\top \occ^\pi_{\mu, P} = 1$, so 
%the value of that policy differs by a universal constant $c$, 
%under $\theta$ and $\theta'$ i.e., 
$\forall \pi, |\theta^T \occ^\pi_{\mu, P} - \theta'^T  \occ^\pi_{\mu, P}| = c$, and therefore the set of optimal policies is the same.
	
To show the other direction, we prove that if $\theta - \theta' \notin \textrm{span}(\{\bone_{|\Scal|}\})$, then there exists $(E, R)$ such that the sets of optimal policies differ. In particular, we choose $R = -\theta'$, so that all policies are optimal under $R + \theta' = 0$. Since $\theta - \theta' \notin \textrm{span}(\{\bone_{|\Scal|}\})$, there exists states $i$ and $j$ such that $\theta(i) + R(i) \ne \theta(j) + R(j)$. Suppose $i$ is the one with smaller sum of rewards, then we can make $j$ an absorbing state, and have two deterministic actions in $i$ that transition to $i$ and $j$ respectively. %FIX THIS, WHAT ABOUT DISCOUNT FACTOR?. 
Under $R+\theta$, the self-loop in state $i$ is suboptimal, and this completes the proof. \qed

\section{Proof of Lemma~\ref{lem:emu}} \label{app:emu}
The construction is as follows. Choose $\sref$ as the initial state, and make all other states absorbing. Let $R'(\sref) = 0$ and $R'$ restricted on $\Scal \setminus \{\sref\}$ coincide with $R$. The remaining work is to design the transition distribution of each action in $\sref$ so that the induced state occupancy matches exactly one column of $X$. 

Fixing any action $a$, and let $x$ be the feature that we want to associate $a$ with. The next-state distribution of $(\sref, a)$ is as follows: with probability $p = \frac{1-\|x\|_1}{1 - \gamma\|x\|_1}$ the next-state is $\sref$ itself, and the probability of transitioning to the $j$-th state in $\Scal \setminus \{\sref\}$ is $\frac{1-\gamma}{1-\gamma\|x\|_1} x(j)$. Given $\|x\|_1 \le 1$ and $x \ge 0$, it is easy to verify that this is a valid distribution.

Now we calculate the occupancy of policy $\pi(\sref) = a$. The normalized occupancy on $\sref$ is 
$$
(1-\gamma) (p + \gamma p^2 + \gamma^2 p^3 + \cdots) = \frac{p(1-\gamma)}{1-\gamma p} = 1 - \|x\|_1.
$$
The remaining occupancy, with a total $\ell_1$ mass of $\|x\|_1$, is split among $\Scal \setminus \{\sref\}$ proportional to $x$. Therefore, when we convert the MDP problem as in Example~\ref{exm:mdp2bandit}, the corresponding feature vector is exactly $x$, so we recover the original linear bandit problem. \qed

%While the recovered feature vector $\gamma x$ is a rescaled version of $x$, conventionally $\gamma$ is a non-zero constant away from $0$, hence will be absorbed by $O(\cdot)$ notation in most cases (e.g., the lower bound for MDPs that parallels Theorem~\ref{thm:lower_bound}). Also, this is purely an artifact of the assumption in IRL literature, that reward is function of state and occurs before transition. Under this assumption, the first unit of reward is uncontrollable (it comes from the initial state), and any controllable reward is at least discounted by $\gamma$. This would be resolved if we assume, for example, that reward occurs after the transition, which is also standard in general RL literature. In this case, we would recover exactly $x$ instead of $\gamma x$ from the above construction by simply letting $p = 1 - \|x\|_1$.

\section{Proof of Proposition~\ref{prop:ellip_identify}}
\label{app:ellip_identify}
Assume towards contradiction that $\|c_{T_0} - \trueR\|_\infty > \epsilon $. We will choose $(R_t, x_t\ii{1}, x_t\ii{2})$ to make the algorithm err. In particular, let $R_t = - c_{T_0}$, so that the algorithm acts greedily with respect to $\bzro_d$. Since $\bzro_d^\top x_t^a \equiv 0$, any action would be a valid choice for the algorithm.
	
On the other hand, $\|c_{T_0} - \trueR\|_\infty > \epsilon$ implies that there exists a coordinate $j$ such that
$
|\ee_j^\top (\trueR - c_{T_0})| > \epsilon,
$ 
where $\ee_j$ is a basis vector. 
Let $x_t\ii{1} = \bzro_d$ and $x_t\ii{2} = \ee_j$.  So the value of action $1$ is always $0$ under any reward function (including $\trueR + R_t$), and the value of action $2$ is
$
(\trueR + R_t)^\top x_t\ii{2}  = (\trueR - c_{T_0})^\top \ee_j,
$ 
whose absolute value is greater than $\epsilon$. At least one of the 2 actions is more than $\epsilon$ suboptimal, and the algorithm may take any of them, so the algorithm can err again. \qed

\section{Proof of Theorem~\ref{thm:spread}} \label{app:spread}
	It suffices to show that in any round $t$, if $\|c_t - \trueR\|_\infty > \frac{\epsilon \sqrt{(K-1)/2}}{\spread(X)}$, then $l_t > \epsilon$. The bound on $T$ follows directly from Theorem~\ref{thm:ellipsoid}. Similar to the proof of Proposition~\ref{prop:ellip_identify}, our choice of the task reward is $R_t = - c_t$, so that any $a \in A$ would be a valid choice of $a_t$, and we will choose the worst action. Note that $\forall a, a'\in \Abdt$,
	\begin{align*}
	l_t = (\trueR + R_t)^\top (x^{a_t^\star} - x^{a_t})
	\ge (\trueR - c_t)^\top (x^{a} - x^{a'}).
	\end{align*}
	So it suffices to show that there exists $a, a' \in \Abdt$, such that $(\trueR - c_t)^\top (x^{a} - x^{a'}) > \epsilon$. Let $y_t = \trueR - c_t$, and the precondition implies that $\|y_t\|_2 \ge \|y_t\|_\infty > \frac{\epsilon \sqrt{(K-1)/2}}{\spread(X)}$.
	
	Define a matrix $D$ of size $K \times (K(K-1))$, where each column 
	\begin{align}
	D = \begin{bmatrix}
	1 & 1 & \cdots & 0 \\
	-1 & 0 & \cdots & 0 \\
	0 & -1 & \cdots & 0 \\
	& & \ddots & \\
	0 & 0 & \cdots & -1 \\
	0 & 0 & \cdots & 1
	\end{bmatrix}.
	\end{align}
	contains exactly one $1$ and one $-1$ (the remaining entries are $0$), and the columns enumerate all possible positions of them. With the help of this matrix, we can rewrite the desired result ($\exists \, a, a' \in A$, s.t.~$(\trueR - c_t)^\top (x^{a} - x^{a'}) > \epsilon$) as 
	$
	\|y_t^\top X D\|_\infty \ge \epsilon.
	$ 
	We relax the LHS as $\|y_t^\top X D\|_\infty \ge \|y_t^\top X D\|_2 / \sqrt{K(K-1)}$, and will provide a lower bound on $\|y_t^\top X D\|_2$. Note that
	\begin{align*}
	y_t^\top X D = y_t^\top (\widetilde X + (X - \widetilde X))  D = y_t^\top \widetilde X D,
	\end{align*}
	because every row of $(X - \widetilde X)$ is some multiple of $\bone_K^\top$ (recall Definition~\ref{def:spread}), and every column of $D$ is orthogonal to $\bone_K$. Let $\widehat{(\cdot)}$ be the vector normalized to unit length,
	\begin{align*}
	\|y_t^\top \widetilde X D\|_2 
	= \|y_t\|_2 \|\hat y_t^\top \widetilde X D\|_2 
	= \|y_t\|_2 \|\hat y_t^\top \widetilde X\|_2 \|\widehat{\hat y_t^\top \widetilde X} \,  D\|_2.
	\end{align*}
	We lower bound each of the 3 terms. For the first term, we have the precondition $\|y_t\|_2 > \frac{\epsilon \sqrt{(K-1)/2}}{\spread(X)}$. The second term is $\widetilde{X}$ left multiplied by a unit vector, so its $\ell_2$ norm can be lower bounded by the smallest non-zero singular value of $\widetilde{X}$ (recall that $\widetilde{X}$ is full-rank), which is $\spread(X)$.  
	
	To lower bound the last term, note that 
	$
	D D^\top = 2K \mathbf{I}_K - 2 \bone_K \bone_K^\top,
	$  
	and rows of $\widetilde{X}$ are orthogonal to $\bone_K^\top$ and so is $y_t^\top \widetilde{X}$, so
	\begin{align*}
	\|\widehat{\hat y_t^\top \widetilde X} \,  D\|_2^2 
	\ge 
	\inf_{\|z\|_2 = 1,\, z \bot \bone_K} z^\top D D^\top z 
	= \inf_{\|z\|_2 = 1,\, z \bot \bone_K} z^\top (2K \mathbf{I}_K - 2 \bone_K \bone_K^\top) z 
	%= \inf_{\|z\|_2 = 1,\, z \bot \bone_K} 2K z^\top z 
	= 2K.
	\end{align*}
	Putting all the pieces together, we have
	\begin{align*}
	\|y_t^\top \widetilde X D\|_\infty
	\ge \frac{\|y_t\|_2 \|\hat y_t^\top \widetilde X\|_2 \|\widehat{\hat y_t^\top \widetilde X} \,  D\|_2}{\sqrt{K(K-1)}} 
	>  \frac{\epsilon \sqrt{(K-1)/2}}{\spread(X)} \cdot \spread(X) \cdot \frac{\sqrt{2K}}{\sqrt{K(K-1)}} = \epsilon. \qedhere
	\end{align*}

\section{Proof of Theorem~\ref{thm:lower_bound}}
\label{app:lower_bound}
As a standard trick, we randomize $\trueR$  by sampling each element i.i.d.~from $\textrm{Unif}([-1,1])$. We will prove that there exists a strategy of choosing $(X_t, R_t)$ such that any algorithm's expected number of mistakes is $\Omega(d \log(1/\epsilon)$, where the expectation is with respect to the randomness of $\trueR$ and the internal randomness of the algorithm. This immediately implies a worst-case result as max is no less than average (regarding the sampling of $\trueR$).

In our construction, $X_t = [\bzro_d,~ \ee_{j_t}]$, where $j_t$ is some index to be specified. Hence, every round the \learner is essentially asked to decided whether $\theta(j_t) \ge - R_t(j_t)$. The adversary's strategy goes in phases, and $R_t$ remains the same during each phase. Every phase has $d$ rounds where $j_t$ is enumerated over $\{1, \ldots, d\}$. To fully specify the nature's strategy, it remains to specify $R_t$ for each phase. 

In the 1st phase, $R_t\equiv 0$. For each coordinate $j$, the information revealed to the \learner is one of the following: $\trueR(j) > \epsilon$, $\trueR(j) \ge - \epsilon$, $\trueR(j) < - \epsilon$, $\trueR(j) \le \epsilon$. For clarity we first make an simplification, that the revealed information is either $\trueR(j) > 0$ or $\trueR(j) \le 0$; we will deal with the subtleties related to $\epsilon$ at the end of the proof.

In the 2nd phase, we fix $R_t$ as 
\begin{align*}
R_t(j) = \begin{cases}
-1/2 & \textrm{if $\trueR(j) \ge 0$,} \\
1/2 & \textrm{if $\trueR(j) < 0$.}
\end{cases}
\end{align*}
Since $\trueR$ is randomized i.i.d.~for each coordinate, the posterior of $\trueR + R_t$ conditioned on the revealed information is $\textrm{Unif}[-1/2, 1/2]$, for any algorithm and any interaction history. Therefore the 2nd phase is almost identical to the 1st phase except that the intervals have shrunk by a factor of $2$. Similarly in the 3rd phase we use $R_t$ to offset the posterior of $\trueR + R_t$ to $\textrm{Unif}([-1/4, 1/4])$, and so on. 

In phase $m$, the half-length of the interval is $2^{-m+1}$, and the probability that a mistake occurs is at least $1/2 - \epsilon/2^{-m+2}$ for any algorithm. The whole process continues as long as this probability is greater than $0$. By linearity of expectation, we can lower bound the total mistakes by the sum of expected mistakes in each phase, which gives
\begin{align} \label{eq:oblivious}
\sum_{2^{-m+1} \ge \epsilon} d (1/2 - \epsilon/2^{-m+2}) 
\ge \sum_{2^{-m+1} \ge 2 \epsilon} d \cdot 1/4
\ge \lfloor \log_2(1/\epsilon)\rfloor d/4.
\end{align}
The above analysis made a simplification that the posterior of $\trueR+R_t$ in phase $m$ is $[-2^{-m+1}, 2^{-m+1}]$. We now remove the simplification. Note, however, that if we  choose $R_t$ to center the posterior, $R_t$ reveals no additional information about $\trueR$, and in the worst case the interval shrinks to half of its previous size minus $\epsilon$. So the length of interval in phase $m$ is at least $2^{-m+2} (1+\epsilon) - 2\epsilon$, and the error probability is at least $1/2 - \epsilon / (2^{-m+1} (1+\epsilon) - \epsilon)$. The rest of the analysis is similar: we count the number of phases until the error probability drops below $1/4$, and in each of these phases we get at least $d/4$ mistakes in expectation. The number of such phases is given by 
$$ 1/2 - \epsilon / (2^{-m+1} (1+\epsilon) - \epsilon) \ge 1/4, $$
which is satisfied when $2^{-m+1} \ge 5\epsilon$, that is, when $m \le \lfloor \log_2 \frac{2}{5\epsilon} \rfloor$. This completes the proof. \qed

\section{Bounding the $\ell_\infty$ distance between $\trueR$ and the ellipsoid center} \label{app:bound_center}
To prove Theorem~\ref{thm:trajectory}, we need an upper bound on $\|\trueR - c\|_\infty$ for quantifying the error due to $H$-step truncation and sampling effects, where $c$ is the ellipsoid center. As far as we know there is no standard result on this issue. However, a simple workaround, described below, allows us to assume $\|\trueR - c \|_\infty \le 2$ without loss of generality.

Whenever $\|c\|_\infty > 1$, there exists coordinate $j$ such that $|c_j| > 1$. We can make a central cut $\ee_j^\top (\theta - c) < 0$ (or $>0$ depending on the sign of $c_j$), and replace the original ellipsoid with the MVEE of the remaining shape. This operation never excludes any point in $\Theta_0$, hence it allows the proofs of Theorem~\ref{thm:ellipsoid} and \ref{thm:trajectory} to work. %While it is very similar to a normal step in the ellipsoid algorithm, the cuts we make here are ``free'' in the sense that they do not require interactions with the real environment. 
We keep making such cuts and update the ellipsoid accordingly, until the new center satisfies $\|c\|_\infty \le 1 $. Since central cuts reduce volume substantially (Lemma~\ref{lem:vol}) and there is a lower bound on the volume, the process must stop after finite number of operations. After the process stops, we have $\|\trueR - c\|_\infty \le \|\trueR\|_\infty + \|c\|_\infty \le 2$.

\section{Proof of Theorem~\ref{thm:trajectory}}
\label{app:trajectory}
We first introduce a standard concentration inequality for martingales. %The choice of batch size $n$  depends on the accuracy we need, and will be determined by the following concentration inequality.
\begin{lemma}[Azuma's inequality for martingales] \label{lem:azuma}
	Suppose $\{S_0, S_1, \ldots, S_n\}$ is a martingale and $|S_i - S_{i-1}| \le b$ almost surely. Then with probability at least $1-\delta$ we have 
	$
	|S_n - S_0| \le b \sqrt{2n\log(2/\delta)}.
	$
\end{lemma}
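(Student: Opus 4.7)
The plan is to prove Azuma's inequality by the standard Chernoff-method argument applied to the martingale difference sequence $D_i := S_i - S_{i-1}$. First I would establish the key moment-generating-function bound on each increment: since the martingale property gives $\EE[D_i \mid S_0, \ldots, S_{i-1}] = 0$ and the hypothesis gives $|D_i| \le b$ almost surely, Hoeffding's lemma yields $\EE[e^{\lambda D_i} \mid S_0, \ldots, S_{i-1}] \le e^{\lambda^2 b^2 / 2}$ for every $\lambda \in \RR$. Hoeffding's lemma itself follows from a short convexity argument (bounding $e^{\lambda x}$ on $[-b,b]$ by the affine interpolant between its endpoints) combined with an elementary optimization; I would either sketch it in a line or two or cite a standard reference.

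Next I would chain these conditional bounds together via the tower property. Writing $S_n - S_0 = \sum_{i=1}^n D_i$ and repeatedly conditioning on the filtration through step $i-1$ while peeling off the outermost increment, an easy induction on $n$ gives
\begin{align*}
\EE\bigl[e^{\lambda (S_n - S_0)}\bigr] \le e^{n \lambda^2 b^2 / 2}.
\end{align*}
Applying Markov's inequality to the nonnegative random variable $e^{\lambda(S_n - S_0)}$ for any $\lambda > 0$ then yields the one-sided tail bound $\Pr(S_n - S_0 \ge t) \le e^{-\lambda t + n \lambda^2 b^2 / 2}$, and optimizing over $\lambda$ (the minimum occurs at $\lambda = t/(n b^2)$) gives $\Pr(S_n - S_0 \ge t) \le e^{-t^2 / (2 n b^2)}$.

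Finally, the sequence $-S_0, -S_1, \ldots, -S_n$ is also a martingale with increments bounded by $b$, so the identical argument applied to it yields $\Pr(S_n - S_0 \le -t) \le e^{-t^2 / (2 n b^2)}$. A union bound gives $\Pr(|S_n - S_0| \ge t) \le 2 e^{-t^2 / (2 n b^2)}$; setting the right-hand side equal to $\delta$ and inverting for $t$ produces exactly $t = b \sqrt{2 n \log(2/\delta)}$, which is the stated conclusion.

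The only non-mechanical step is the MGF bound from Hoeffding's lemma; once that is in hand, the tower-property induction, Markov step, optimization in $\lambda$, and two-sided union bound are all routine. Since the paper explicitly labels this as a standard inequality used only as a tool, a compact proof (or even a citation to a textbook treatment such as Boucheron--Lugosi--Massart) is entirely appropriate here.
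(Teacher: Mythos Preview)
Your proof is correct and is precisely the standard Chernoff--Hoeffding argument for Azuma's inequality. The paper itself does not prove this lemma at all; it simply introduces it as ``a standard concentration inequality for martingales'' and immediately uses it in the proof of Theorem~\ref{thm:trajectory}. So there is nothing to compare: you have supplied the textbook proof where the paper opted to state the result without justification, and your closing remark that a citation would suffice here matches exactly what the authors did.
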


\begin{proof}
Since the update rule is still in the format of a central cut through the ellipsoid, Lemma~\ref{lem:vol} applies. It remains to show that the update rule preserves $\trueR$ and a certain volume around it, and then we can follow the same argument as for Theorem~\ref{thm:ellipsoid}.

Fixing a mini-batch, let $t_0$ be the round on which the last update occurs, and $\Theta = \Theta_{t_0}, c = c_{t_0}$. Note that $\Theta_t = \Theta$ during the collection of the current mini-batch and does not change, and $c_t = c$ similarly. 

For each $i=1, 2, \ldots, n$, define $z_i^{\star, H}$ as the expected value of $\hat z_i^{\star, H}$, where expectation is with respect to the randomness of the trajectory produced by the \demoer, and let $z_i^\star$ be the infinite-step expected state occupancy. Note that $\hat z_i^{\star, H}, z_i^{\star, H}, z_i^\star \in \RR^{|\Scal|}$. % because the occupancy on $\sref$ is not included. 

As before, we have $\trueR^\top (z_i^{\star} - z_i) > \epsilon$ and $c^\top (z_i^{\star} - z_i) \le 0$, so $(\trueR- c)^\top (z_i^{\star} - z_i) > \epsilon$. Taking average over $i$, we get $(\trueR- c)^\top (\frac{1}{n} \sum_{i=1}^n z_i^{\star} - \frac{1}{n} \sum_{i=1}^n z_i) > \epsilon$.

What we will show next is that $(\trueR- c)^\top (\frac{\bar Z^\star}{n} - \frac{\bar Z}{n}) > \epsilon / 3$ for $\bar Z^\star$ and $\bar Z$ on Line~\ref{lin:traj_update}, which implies that the update rule is valid and has enough slackness for lower bounding the volume of $\Theta_t$ as before. Note that
\begin{talign*} 
	&~ (\trueR- c)^\top (\frac{\bar Z^\star}{n} - \frac{\bar Z}{n}) 
	= (\trueR- c)^\top (\frac{1}{n}\sum_{i=1}^n z_i^\star - \frac{1}{n} \sum_{i=1}^n z_i) \\
	&~~~ - (\trueR- c)^\top (\frac{1}{n}\sum_{i=1}^n z_i^\star - \frac{1}{n}\sum_{i=1}^n z_i^{\star, H} ) \\
	&~~~ - (\trueR- c)^\top (\frac{1}{n}\sum_{i=1}^n z_i^{\star, H} - \frac{1}{n}\sum_{i=1}^n \hat z_i^{\star, H}).
\end{talign*}
Here we decompose the expression of interest into 3 terms. The 1st term is lower bounded by $\epsilon$ as shown above, and we will upper bound each of the remaining 2 terms by $\epsilon / 3$. For the 2nd term, since $\|z_i^{\star, H} - z_i^{\star}\|_1 \le \gamma^H$, the $\ell_1$ norm of the average follows the same inequality due to convexity, and we can bound the term using H\"older's inequality given $\|\trueR - c\|_\infty \le 2$ (see details of this result in Appendix~\ref{app:bound_center}). To verify that the choice of $H$ in the theorem statement is appropriate, we can upper bound the 2nd term as
\begin{talign*}
	2 \gamma^H = 2 ((1 - (1-\gamma))^{\frac{1}{1-\gamma}})^{\log(6/\epsilon)} \le 2 e^{-\log(6/\epsilon)} = \frac{\epsilon}{3}.
\end{talign*}

For the 3rd term, fixing $\trueR$ and $c$, the partial sum 
$
\sum_{j=1}^i (\trueR- c)^\top (z_i^{\star, H} - \hat z_i^{\star, H})
$
is a martingale. % (see similar use of Azuma's inequality in \citep{hsu2015mixing})
Since $\|z_i^{\star, H}\|_1 \le 1$, $\|\hat z_i^{\star, H}\|_1 \le 1$,  and $\|\trueR - c\|_\infty \le 2$, we can initiate Lemma~\ref{lem:azuma} by letting $b = 4$, and setting $n$ to sufficiently large to guarantee that the 3rd term is upper bounded by $\epsilon/3$ with high probability. 

Given $(\trueR- c)^\top (\frac{\bar Z^\star}{n} - \frac{\bar Z}{n}) > \epsilon / 3$, we can follow exactly the same analysis as for Theorem~\ref{thm:ellipsoid} to show that $B_{\infty}(\trueR, \epsilon/6)$ is never eliminated, and the number of updates can be bounded by $2d(d+1)\log \frac{12\sqrt{d}}{\epsilon}$. The number of total mistakes is the number of updates multiplied by $n$, the size of the mini-batches. Via Lemma~\ref{lem:azuma}, we can verify that the choice of $n$ in the theorem statement satisfies $|\sum_{j=1}^i (\trueR- c)^\top (z_i^{\star, H} - \hat z_i^{\star, H})| \le n\epsilon / 3$ with probability at least $1 - \delta / \left( 2d(d+1)\log \frac{12\sqrt{d}}{\epsilon} \right)$. Union bounding over all updates and the total failure probability can be bounded by $\delta$.
\end{proof}

\end{document}